\documentclass{article} 
\usepackage{iclr2027_conference,times}

\usepackage{amsmath,amsfonts,bm}

\def\eqref#1{equation~\ref{#1}}
\def\Eqref#1{Equation~\ref{#1}}

\def\1{\bm{1}}

\def\ra{{\textnormal{a}}}

\def\rx{{\textnormal{x}}}

\def\rva{{\mathbf{a}}}

\def\erva{{\textnormal{a}}}

\def\ervx{{\textnormal{x}}}

\def\rmA{{\mathbf{A}}}

\def\vmu{{\bm{\mu}}}
\def\vtheta{{\bm{\theta}}}
\def\va{{\bm{a}}}

\def\ve{{\bm{e}}}

\def\vx{{\bm{x}}}

\def\eva{{a}}

\def\mA{{\bm{A}}}

\def\mH{{\bm{H}}}
\def\mI{{\bm{I}}}
\def\mJ{{\bm{J}}}

\def\mX{{\bm{X}}}

\def\mSigma{{\bm{\Sigma}}}

\DeclareMathAlphabet{\mathsfit}{\encodingdefault}{\sfdefault}{m}{sl}
\SetMathAlphabet{\mathsfit}{bold}{\encodingdefault}{\sfdefault}{bx}{n}
\newcommand{\tens}[1]{\bm{\mathsfit{#1}}}
\def\tA{{\tens{A}}}

\def\tX{{\tens{X}}}

\def\gG{{\mathcal{G}}}

\def\sA{{\mathbb{A}}}
\def\sB{{\mathbb{B}}}

\def\sS{{\mathbb{S}}}

\def\emA{{A}}

\newcommand{\etens}[1]{\mathsfit{#1}}

\def\etA{{\etens{A}}}

\newcommand{\E}{\mathbb{E}}

\newcommand{\R}{\mathbb{R}}

\newcommand{\KL}{D_{\mathrm{KL}}}
\newcommand{\Var}{\mathrm{Var}}

\newcommand{\Cov}{\mathrm{Cov}}

\newcommand{\normltwo}{L^2}
\newcommand{\normlp}{L^p}

\newcommand{\parents}{Pa} 

\DeclareMathOperator{\graph}{graph}

\usepackage{hyperref}
\usepackage{url}
\usepackage{graphicx}
\usepackage{capt-of}
\usepackage{booktabs}
\usepackage{amsthm}
\usepackage{mathtools}
\usepackage{algorithm}
\usepackage{comment}
\usepackage{booktabs,tabularx,xcolor}
\usepackage[noend]{algpseudocode}
\theoremstyle{plain}
\newtheorem{theorem}{Theorem}[section]
\newtheorem{lemma}[theorem]{Lemma}
\newtheorem{proposition}[theorem]{Proposition}
\newtheorem{rem}[theorem]{Remark}
\theoremstyle{definition}
\newtheorem{definition}[theorem]{Definition}

\title{Lagrangian--Hamiltonian Flows for Video Prediction and Image Generation: A Symplectic Perspective}

\author{
Jiawei Hu\\
Department of Mathematics and Statistics, Boston University\\
\href{jiaweihu@bu.edu}{\texttt{jiaweihu@bu.edu}}
}

\iclrfinalcopy
\begin{document}

\maketitle
\pagestyle{plain}

\begin{abstract}
    We introduce LHFM, a geometric framework for learning image dynamics.
    Drawing on structures central to classical mechanics, symplectic
    geometry, and geometric quantization, LHFM represents each image as an
    exact Lagrangian graph and models its evolution through image-dependent
    Hamiltonian flows, which yield a transport--source parameterization of
    image velocities. Our primary application is deterministic video
    prediction: LHFM-V is a recurrent model that advances frames by
    integrating predicted transport and source fields, and achieves the
    lowest reported FLOP count among the compared recurrent models with
    similar prediction accuracy. The image variant, LHFM-I, shows that the
    same construction is compatible with flow matching: in a matched
    experiment, it attains a lower FID than the flow-matching baseline.
    Code is available at
    \url{https://github.com/Lilas-Q/Geometric_quantization_models}.
    \end{abstract}
\section{Introduction}
Diffusion models and flow-based generative models generate images by
transforming a simple distribution into the data distribution
\citep{ho2020denoising,song2021scorebased,lipman2023flow}.
Flow matching (FM) provides a simulation-free training objective for continuous
normalizing flows by regressing a time-dependent vector field onto
conditional vector fields \citep{lipman2023flow}.
In the original FM formulation, the network directly models a vector field
on the data space where each image represents a point of this space.

A similar perspective applies to video prediction: conditioned on the
observed sequence, a learned flow evolves the observed frames into
future frames.
Thus, image generation and video prediction can both be formulated as
learning a flow on image space, although their initial conditions and
training objectives differ.

Image dynamics involve both spatial structure (pixel
locations) and color (channel values). In standard FM, however,
the vector field acts on channel values at fixed pixel locations, so changes
in spatial structure are reflected only implicitly through changes in color.
This motivates a question: how can these two aspects of image evolution be
expressed through a common geometric representation?

We answer this question by introducing \emph{Lagrangian--Hamiltonian
Flow Matching} (LHFM), a geometric framework that combines Lagrangian
image representations with Hamiltonian dynamics. A Lagrangian submanifold can be viewed as a ``generalized point" which encodes the space and phase simultaneously. Motivated by this, we  represent each image as an exact Lagrangian graph that jointly encodes pixel locations and channel values.
Image evolution is then described by Hamiltonian flows that transport
these graphs, as illustrated in Figure~\ref{fig:lhfm-hamiltonian-flow}.
Throughout, ``Lagrangian'' refers to a submanifold rather than to an action functional.

\begin{figure}[!ht]
    \centering
    \begin{minipage}[c]{0.63\linewidth}
        \centering
        \includegraphics[width=\linewidth]{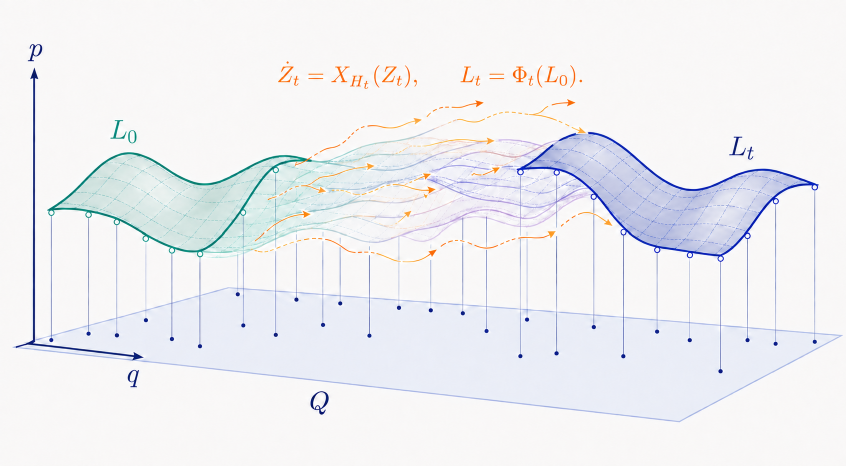}
    \end{minipage}\hfill
    \begin{minipage}[c]{0.33\linewidth}
        \caption{\textbf{Hamiltonian deformation of a Lagrangian submanifold.}
        For a fixed image trajectory, the time-dependent ambient Hamiltonian
        $H_t$ transports $L_0$ to $L_t=\Phi_t(L_0)$.
         The surfaces schematically represent
        higher-dimensional exact Lagrangians.}
        \label{fig:lhfm-hamiltonian-flow}
    \end{minipage}
\end{figure}

A central issue is compatibility: a Hamiltonian flow preserves the Lagrangian property but need not preserve a graph representation. We show that a Hamiltonian transports a prescribed family of
exact Lagrangian graphs if and only if it satisfies a Hamilton--Jacobi
condition.  This characterization leads to a natural parameterization of the vector field on the data space in terms of a \emph{transport vector field}, which moves spatial locations, and a \emph{source field}, which changes channel values along that motion. The construction is detailed in Section~\ref{sec:image-lagrangian-fm}.

We instantiate LHFM as LHFM-I for image generation and LHFM-V for
deterministic video prediction, where I and V denote image and video,
respectively. We evaluate the two variants on unconditional CIFAR-10
generation and Moving MNIST prediction.
In a matched comparison, LHFM-I improves upon the standard
independent conditional flow-matching baseline. LHFM-V uses a recurrent architecture and has the lowest reported FLOP
count among the compared recurrent models with similar prediction
accuracy.

The primary focus of this work is deterministic video prediction:
LHFM-V is our main method, and the video experiments constitute the
main empirical evaluation. LHFM-I is included to show that the same
transport--source parameterization is compatible with flow-matching
image generation without changing the training objective. We therefore
evaluate LHFM-I only in a controlled comparison against a matched
I-CFM baseline, and we do not claim state-of-the-art image generation.

Our contributions are:
\begin{itemize}
    \item to our knowledge, the first representation of images as exact
    Lagrangian graphs, with a Hamilton--Jacobi characterization (classical in
    form; proved in Appendix~\ref{app:image-graph-hamiltonians-proof}) of the
    Hamiltonians that transport them;
    \item a transport--source parameterization derived from it, which contains
    the standard flow-matching velocity as the case $U_t\equiv 0$; and
    \item empirical studies on deterministic Moving MNIST prediction,
    where LHFM-V has the lowest FLOP count among the compared recurrent
    video predictors (Table~\ref{tab:video-prediction-results}), and on
    CIFAR-10 generation, where LHFM-I attains a lower FID than a matched
    I-CFM baseline, demonstrating compatibility with flow matching.
\end{itemize}

\section{Related Work}
\label{sec:related-work}

\paragraph{Flow matching.}
Flow matching trains continuous normalizing flows through
simulation-free regression of conditional vector fields
\citep{lipman2023flow}.
Conditional flow matching extends this construction to general
source and target distributions, with independent and optimal
transport couplings as particular choices \citep{tong2024improving}.
LHFM-I retains the independent conditional
flow-matching objective but parameterizes the image velocity
through a transport vector field and a source field.

\paragraph{Geometric formulations.}
Riemannian Flow Matching extends flow matching to manifolds
\citep{chen2024flow}, while Metric Flow Matching uses data-dependent
Riemannian metrics to construct conditional paths
\citep{kapusniak2024metric}.
LHFM takes a different approach: it represents each image as an
exact Lagrangian graph and characterizes Hamiltonian flows that
transport these representations.
The underlying connection between Lagrangian submanifolds and
the Hamilton--Jacobi equation is classical
\citep{carinena2006geometric}, and transport--source image dynamics
also appear in image metamorphosis
\citep{trouve2005metamorphoses,holm2009eulerpoincare}.
Our contribution applies this connection to image representations
and derives a transport--source parameterization of image dynamics.

\paragraph{Video prediction.}
Recurrent video predictors advance a hidden state frame by frame, from
ConvLSTM \citep{shi2015convolutional} and the PredRNN family
\citep{wang2017predrnn,wang2022predrnnv2} to PhyDNet, which constrains
part of its latent dynamics with a learned PDE
\citep{leguen2020disentangling}. LHFM-V belongs to this category:
conditioned on observed frames, it recurrently predicts transport and
source fields and evolves the image through their combined dynamics.
Recurrent-free predictors \citep{gao2022simvp,tang2026predformer} and
generative video models \citep{davtyan2023efficient} are outside the
scope of this work.

\section{Geometric Preliminaries and motivation}
\label{sec:geometric-preliminaries}

A symplectic manifold $(X^{2m},\omega)$ is a smooth manifold equipped
with a closed, nondegenerate two-form. A submanifold $L\subset X$ is
\emph{Lagrangian} if $\dim L=m$ and $\omega|_L=0$.
The cotangent bundle $T^*Q$ carries the canonical one-form $\lambda$
and symplectic form $\omega=-d\lambda$.
For a smooth function $S:Q\to\mathbb R$, the graph
$\operatorname{graph}(dS)$ is an exact Lagrangian submanifold:
the pullback of $\lambda$ to the graph equals $dS$
\citep{cannas2001lectures,mcduff2017introduction}.

A time-dependent Hamiltonian $H_t:T^*Q\to\mathbb R$ determines a
Hamiltonian vector field through $\iota_{X_{H_t}}\omega=d_zH_t$.
Its flow preserves the symplectic form and the exactness of Lagrangian
submanifolds, but not necessarily their graph representations.
Our construction uses exact graphs to represent images and the
Hamilton--Jacobi equation to characterize their evolution.

The cotangent bundle $T^*Q$ provides the natural phase space of
classical mechanics, with its canonical coordinates representing
position and momentum. In semiclassical analysis, an oscillatory
state of the form $a(q)e^{\mathrm{i}S(q)/\hbar}$ is microlocally
associated with the Lagrangian graph $L_S=\graph(d S)$, on which
the phase function determines the relation $p=d S(q)$.
Whereas a classical phase-space point specifies position and
momentum simultaneously, the Heisenberg uncertainty principle
prevents a quantum state from being sharply localized in both
variables \citep{kennard1927zur,robertson1929uncertainty}.
This phase-space viewpoint motivates our use of Lagrangian graphs to organize spatial locations, channel values, and their local spatial sensitivities within a common geometric representation.

\begin{table}[!htbp]
    \centering
    \caption{Geometric motivation for LHFM.
      Hamiltonian evolution refers to
    the ambient cotangent bundle, not the image-space vector field.}
    \label{tab:geometric-motivation}

    \small
    \setlength{\tabcolsep}{5pt}
    \renewcommand{\arraystretch}{1.12}

    \begin{tabularx}{\linewidth}{@{}
        >{\raggedright\arraybackslash}p{0.20\linewidth}
        >{\raggedright\arraybackslash}X
        >{\raggedright\arraybackslash}X
        @{}}
        \toprule[0.6pt]
        & \textbf{Classical mechanics}
        & \textbf{Semiclassical (WKB)} \\
        \midrule[0.3pt]
        Representation
        & Point $(q,p)\in T^*Q$
        & Lagrangian graph with amplitude \\

        \specialrule{1pt}{4pt}{4pt}

        & \textbf{Vanilla  Flow Matching}
        & \textbf{LHFM} \\
        \midrule[0.3pt]
        Representation
        & An image point $J$
        & Graph Lagrangian $L_J=\operatorname{graph}(dS_J)$ \\

        \addlinespace[2pt]
        Dynamics
        & A general learned Flow in image space
        & Hamiltonian flow on $T^*Q$ \\
        \bottomrule[0.6pt]
    \end{tabularx}
\end{table}

\section{Lagrangian--Hamiltonian Flow Matching}
\label{sec:image-lagrangian-fm}

We first define a Lagrangian representation of an image and characterize
the Hamiltonian flows that transport these representations. We then use
the induced image dynamics for conditional flow matching and deterministic
video prediction. The representation and continuous geometric construction
are shared; initial conditions, conditioning, losses, and numerical methods
are specified separately for the two applications.

\subsection{Image Space and Continuous Representation}

Let $M=\mathbb{R}^2$ be the spatial coordinate space and
$V=\mathbb{R}^n$ the space of $n$-channel pixel values.
For the pixel-centered grid $\mathcal{G}_N$ of an $N\times N$
image, write $\mathcal{X}_N=V^{\mathcal{G}_N}$ and $d=nN^2$.
The linear full-band cosine extension associates each image
$I\in\mathcal{X}_N$ with a smooth map
$F_I=\mathcal{F}_N(I):M\to V$ satisfying
$F_I|_{\mathcal{G}_N}=I$.
The detailed construction and its properties are given in
Appendix~\ref{app:dct-interpolation}.

\subsection{Lagrangian Image Representation}
\begingroup
\setlength{\abovedisplayskip}{5pt plus 1pt minus 2pt}
\setlength{\belowdisplayskip}{5pt plus 1pt minus 2pt}
\setlength{\abovedisplayshortskip}{0pt plus 1pt}
\setlength{\belowdisplayshortskip}{3pt plus 1pt minus 1pt}
Introduce an auxiliary dual variable $a\in V^*$.  Throughout, we use
the standard Euclidean identifications of vector spaces and their
duals when writing coordinate transposes.
Set $Q=\mathbb{R}^2\times V^*$, and equip its cotangent bundle
$T^*Q$ with canonical coordinates $(x,a;\xi,\eta)$, where
$\xi$ and $\eta$ are dual to $x$ and $a$, respectively.
The canonical one-form and symplectic form are
\begin{equation}
    \lambda=\xi^\top d x+\eta^\top d a,
    \qquad
    \omega=-d\lambda.
    \label{eq:canonical-forms}
\end{equation}
Using the natural duality pairing between $V^*$ and $V$,
we define the generating function $S_I:Q\to\mathbb{R}$
and the associated LHFM encoder by
\begin{equation}
\begin{split}
    S_I(x,a)
    &=\langle a,F_I(x)\rangle
      =a^\top F_I(x),\\
    E(I)=L_I
    &=\graph(d S_I)=\left\{
        \bigl(x,a;DF_I(x)^\top a,F_I(x)\bigr)
        :(x,a)\in Q
      \right\}.
\end{split}
\label{eq:intensity-lift}
\end{equation}
Here $DF_I(x)$ denotes the spatial Jacobian of $F_I$.
As the graph of an exact one-form, $L_I$ is an exact
Lagrangian submanifold of $T^*Q$.

Exactness and injective image recovery are established in
Proposition~\ref{prop:intensity-lift} in
Appendix~\ref{app:exactness-proof}.

\paragraph{Geometric Intuition of Lagrangian representation}
The auxiliary variable $a\in V^*$ acts as a linear probe of
the channel values, with scalar response $S_I(x,a)$ at
spatial location $x$. For an RGB image, $a=e_R$ selects
the red channel, whereas $a=(e_G-e_R)/\sqrt{2}$ probes
the normalized green-minus-red contrast.

    Differentiating with respect to the spatial and probe variables yields
    \[
    d S_I
    =
    \underbrace{DF_I(x)^\top a}_{\xi}\cdot d x
    +
    \underbrace{F_I(x)}_{\eta}\cdot d a.
    \]
    Thus, $\eta$ records the channel values and determines how the response
    changes when the probe is varied, while $\xi$ describes how the same
    response changes under a spatial displacement. 
    \begin{figure}[!t]
        \normalfont
        \centering
        \includegraphics[width=\linewidth]
        {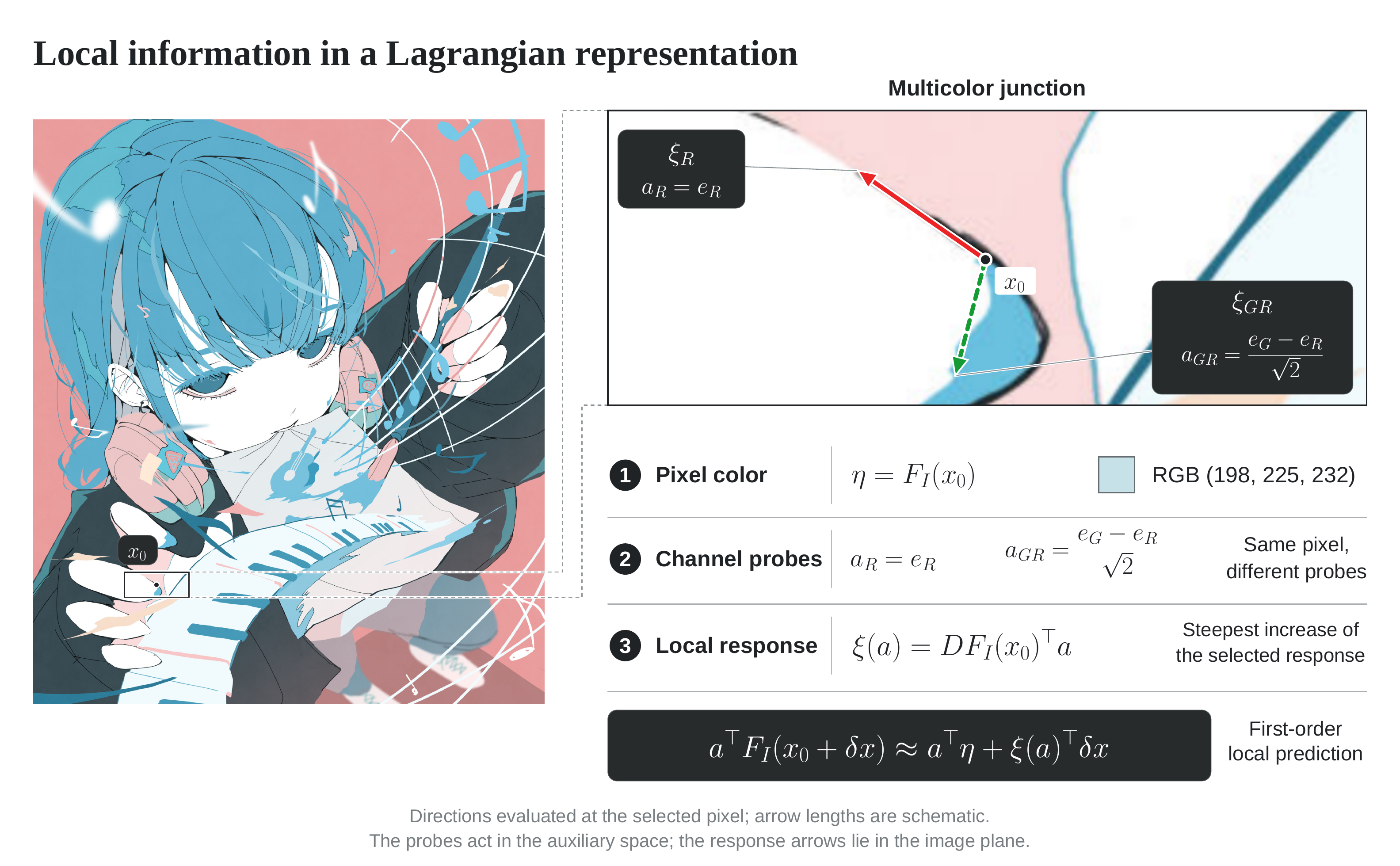}
        \makeatletter
        \let\refstepcounter\H@refstepcounter
        \makeatother
        \caption{\textbf{Local information encoded by the Lagrangian representation.}
        Arrow directions are computed at $x_0$ in Euclidean pixel coordinates; lengths are schematic and not to scale.}
        \label{fig:lagrangian-local-information}
        \end{figure}

As shown in Figure~\ref{fig:lagrangian-local-information},\footnote{Illustration by
\href{https://fromtheasia.com/illustration/2024-2025}{NoCopyrightGirl (NCG)},
reproduced and annotated under the artist's
\href{https://fromtheasia.com/usagerules}{published usage rules}.
The artwork is used solely for illustration and was not used for model training.}
the covector $\xi(a)=DF_I(x_0)^\top a$, gives the direction of steepest
increase of the probe response at $x_0$. The red arrow ($a=e_R$) indicates
the direction in which the red-channel value increases most rapidly,
whereas the green arrow ($a=(e_G-e_R)/\sqrt{2}$) indicates the
direction of steepest increase in the green-minus-red contrast.

The Lagrangian graph therefore organizes
    pixel values and their local spatial sensitivities into a single
    geometric object.

\par\endgroup

\subsection{Hamiltonian Dynamics}
\label{sec:fiber-affine-hamiltonian}
We write $J_t$ for an image trajectory, $F_{J_t}=\mathcal F_N(J_t)$
for its continuous representation, and $J^{(m)}$ for numerical iterates.

\begingroup
\setlength{\parskip}{2pt plus 1pt}
\setlength{\abovedisplayskip}{3pt plus 1pt minus 1pt}
\setlength{\belowdisplayskip}{3pt plus 1pt minus 1pt}
\setlength{\abovedisplayshortskip}{0pt plus 1pt}
\setlength{\belowdisplayshortskip}{2pt plus 1pt minus 1pt}
\setlength{\jot}{1pt}
\setlength{\topsep}{3pt plus 1pt minus 1pt}
\setlength{\partopsep}{0pt}
\setlength{\itemsep}{1pt plus 0.5pt}
\setlength{\parsep}{1pt}
\paragraph{Hamiltonian parameterization.}
Let $z=(x,a;\xi,\eta)\in T^*Q$ denote the canonical 
coordinates. For a Hamiltonian, 
let $X_{H_t}$ be the Hamiltonian vector field defined by
$\iota_{X_{H_t}}\omega=d_zH_t$.
A Hamiltonian trajectory
$z(t)=(x(t),a(t);\xi(t),\eta(t))$ with initial point $z_0\in T^*Q$
satisfies
\begin{equation}
\label{equ:hamiltonian_flow} 
\frac{d}{d t}z(t)=X_{H_t}\bigl(z(t)\bigr),
\qquad z(0)=z_0.
\end{equation}

A Hamiltonian flow preserves the exactness of a Lagrangian submanifold
\citep[Proposition~4.13]{viterbo2023generating},
but it need not preserve its representation as a graph over the base
\citep[Lemma~2.4 and Corollary~2.5]{kragh2026generating}.
This raises a natural question: which Hamiltonians transport a prescribed
family of exact Lagrangian graphs? The following lemma provides a precise
characterization: such transport occurs if and only if the generating
functions satisfy the associated Hamilton--Jacobi equation up to an
additive function of time. This characterization is central to our
construction.

\begin{lemma}[Hamiltonian transport of image graphs]
    \label{lem:image-graph-hamiltonians}   
    Let $\mathcal I\subset\mathbb R$ be an open time interval and let
    $F\in C^\infty(\mathcal I\times\mathbb R^2;\mathbb R^n)$.
    Write $F_t(x)=F(t,x)$ and define
    \[
    S_t(x,a)=a^\top F_t(x),
    \qquad
    L_t=\operatorname{graph}(dS_t)
    =
    \left\{
    (x,a;DF_t(x)^\top a,F_t(x)):(x,a)\in Q
    \right\}.
    \]
    Let $H_t$ be a smooth time-dependent Hamiltonian on $T^*Q$.
    Assume that its Hamiltonian flow $\Phi^H_{s\rightarrow t}$
    is defined on all of $T^*Q$ for every $s,t\in\mathcal I$.
    
    Then the following statements are equivalent:
    \begin{enumerate}
        \item
        The Hamiltonian flow transports the prescribed image graphs: $    \Phi^H_{s\rightarrow t}(L_s)=L_t$ for 
        $\qquad s,t\in\mathcal I.$

        \item
        The generating function $S_{t}$ and Hamiltonian $H_{t}$ satisfy the Hamilton--Jacobi equation 
        \begin{equation}
        \label{eq:lemma-hamilton-jacobi}
            \partial_tS_t(q)+H_t(q,d_qS_t(q))=c(t),
            \qquad q\in Q,
        \end{equation}
        where $c\in C^\infty(\mathcal I;\mathbb R)$ only depends on the time. 
        \item
        There exist smooth maps $ U:\mathcal I\times T^*Q\to\mathbb R^2$ and $B:\mathcal I\times T^*Q\to\mathbb R^n$
        such that
        \begin{equation}
        \label{eq:lemma-general-hamiltonian}
        \begin{aligned}
            H_t(x,a;\xi,\eta)
            ={}&c(t)-a^\top\partial_tF_t(x) +\bigl(\xi-DF_t(x)^\top a\bigr)^\top
            U_t(x,a;\xi,\eta)\\
 &+\bigl(\eta-F_t(x)\bigr)^\top
            B_t(x,a;\xi,\eta).
        \end{aligned}
        \end{equation}
    \end{enumerate}
    \end{lemma}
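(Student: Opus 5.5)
The plan is to prove (1)$\Leftrightarrow$(2) by the classical argument that a time-dependent family of graphs is transported by a flow exactly when the flow's vector field is tangent to the graph modulo its own velocity. Then (2)$\Leftrightarrow$(3) follows from an algebraic (Hadamard-type) division lemma on the affine subspace $L_t$.

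\textbf{Step 1: (1)$\Leftrightarrow$ infinitesimal condition.} Consider the embedding $\iota_t:Q\to T^*Q$, $\iota_t(q)=(q,d_qS_t(q))$, with image $L_t$. Statement (1) is equivalent to the following: for every $q$, the vector $\partial_t\iota_t(q)-X_{H_t}(\iota_t(q))$ is tangent to $L_t$ at $\iota_t(q)$.

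For the forward direction, differentiate $\Phi^H_{s\to t}(L_s)=L_t$ in $t$. For the converse, observe that the curve $t\mapsto \Phi^H_{s\to t}(\iota_s(q))$ solves a well-posed ODE. Project this curve to $Q$ and call it $q(t)$; it solves the ODE $\dot q=\partial_{(\xi,\eta)}H_t(\iota_t(q))$. Its lift $\iota_t(q(t))$ satisfies the Hamiltonian ODE precisely under the tangency condition. Uniqueness of solutions then gives $\Phi(L_s)\subset L_t$, and applying the same argument with $s,t$ swapped gives equality. Global existence of the flow is assumed, so this is clean.

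\textbf{Step 2: tangency $\Leftrightarrow$ Hamilton--Jacobi.} Both vectors involved are closely tied to $L_t$. The velocity $\partial_t\iota_t=(0,d_q\partial_tS_t)$ is vertical, so it is the Hamiltonian vector field of $\pi^*\partial_tS_t$ up to sign convention. Hence the tangency condition says that $X_{G_t}$ is tangent to $L_t$, where $G_t=H_t+\pi^*\partial_t S_t$.

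A vector field $X_G$ is tangent to a Lagrangian $L$ iff $\omega(X_G,v)=dG(v)=0$ for all $v\in TL$. This uses that $TL$ is its own $\omega$-orthogonal. So the condition is that $G_t$ is locally constant on the connected set $L_t\cong Q$, i.e.\ $\partial_tS_t+H_t(q,dS_t)=c(t)$. Smoothness of $c$ follows by evaluating at a fixed $q$.

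I expect the main obstacle to be the sign convention with $\omega=-d\lambda$. Here I would just verify in coordinates that $\dot\xi=-\partial_xH$ and $\dot\eta=-\partial_aH$, and check that the vertical field $(0,d\partial_tS_t)$ equals $X_{-\pi^*\partial_t S_t}$. If a sign flips, the HJ equation as stated with "$+$" is the correct one, and I would adjust the definition of $G$ accordingly.

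\textbf{Step 3: (2)$\Leftrightarrow$(3).} The direction (3)$\Rightarrow$(2) is immediate. On $L_t$ we have $\xi=DF_t^\top a$ and $\eta=F_t$, so $H_t=c(t)-a^\top\partial_tF_t=c(t)-\partial_tS_t$.

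For (2)$\Rightarrow$(3), set $K_t=H_t-c(t)+a^\top\partial_tF_t(x)$. This function is smooth and vanishes on $L_t$. Introduce the fiber coordinates $u=\xi-DF_t(x)^\top a$ and $v=\eta-F_t(x)$; the change of variables is a smooth diffeomorphism of $\mathcal I\times T^*Q$, jointly in $t$. In these coordinates $K$ vanishes at $(u,v)=0$. Hadamard's lemma then gives
\[
K=\int_0^1\frac{d}{ds}K(su,sv)\,ds=u^\top U+v^\top B,
\]
with $U=\int_0^1\partial_uK(su,sv)\,ds$ and $B=\int_0^1\partial_vK(su,sv)\,ds$. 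Both are smooth in all variables including $t$, which yields \eqref{eq:lemma-general-hamiltonian}.
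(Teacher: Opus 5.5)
Your proposal is correct. Its overall structure matches the paper's: (1)$\Leftrightarrow$(2) via tangency of $\partial_t+X_{H_t}$ to the moving graph plus uniqueness and the inverse flow, (2)$\Rightarrow$(3) by integrating along the momentum fiber, and (3)$\Rightarrow$(2) by restriction. In (2)$\Rightarrow$(3) your $U,B$ are exactly the paper's $\xi$- and $\eta$-components of $Z_t$, because your change of variables is a fiber translation.

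The genuinely different piece is Step~2. The paper writes tangency in coordinates as $-\partial_qH_t(q,d_qS_t)=\partial_t d_qS_t+D_q^2S_t\,\partial_pH_t(q,d_qS_t)$. It then recognizes this via the chain rule as $d_q[\partial_tS_t+H_t(q,d_qS_t)]=0$, which silently uses the symmetry of $D_q^2S_t$. You instead identify the vertical velocity with $X_{-\pi^*\partial_tS_t}$ along $L_t$ and use $TL_t=(TL_t)^{\omega}$, so tangency becomes $dG_t|_{TL_t}=0$. The Lagrangian property you invoke is exactly that Hessian symmetry, made intrinsic. Your sign is right: with $\iota_{X_f}\omega=df$ and $\omega=dq\wedge dp$, a base function $f=\pi^*g$ has $X_f=(0,-d_qg)$, so $G_t=H_t+\pi^*\partial_tS_t$. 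Your version shows that, beyond the verticality of the graph velocity, only the Lagrangian property and connectedness of $L_t$ are used. The paper's version is more hands-on and produces the explicit momentum equations reused in Proposition~\ref{prop:image-graph-characteristics}.

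One step in Step~1 needs reordering. You take $q(t)$ to be the projection of $\Phi^H_{s\to t}(\iota_s(q_0))$ and assert that it solves $\dot q=\partial_pH_t(q,d_qS_t(q))$. That presupposes the trajectory stays on the moving graph, which is what you are proving.

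Instead:
\begin{enumerate}
\item Let $q(t)$ be the maximal solution of this reduced ODE with $q(s)=q_0$.
\item Check that $\iota_t(q(t))$ is a Hamiltonian trajectory. Tangency gives $X_{H_t}-\partial_t\iota_t=D\iota_t\,w$, and applying $d\pi$ gives $w=\partial_pH_t$, since $\partial_t\iota_t$ is vertical.
\item Conclude by uniqueness that $q(t)=\pi\circ\Phi^H_{s\to t}(\iota_s(q_0))$ on the maximal interval.
\end{enumerate}
The right-hand side of that identity is defined and continuous on all of $\mathcal I$. Hence $q(t)$ cannot leave compact sets in finite time, so it exists on all of $\mathcal I$. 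The paper's proof also leaves this extension step implicit.
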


The proof is given in Appendix~\ref{app:image-graph-hamiltonians-proof}.

The restricted coefficients $U_t^L,B_t^L$ and the corresponding
Hamiltonian characteristics are given in
Proposition~\ref{prop:image-graph-characteristics} in
Appendix~\ref{app:image-graph-hamiltonians-proof}.
For a finite-image trajectory, we apply these results with
$F_t=F_{J_t}$ to derive the induced image velocity from
the equation for $\eta$.
\par\endgroup

\paragraph{Induced image dynamics.}
Let $J_t\in\mathcal{X}_N$ be a smooth image path and set
$F_{J_t}=\mathcal{F}_N(J_t)$. Suppose that the Hamiltonian flow
transports the graphs $L_{J_t}$, as characterized by
Lemma~\ref{lem:image-graph-hamiltonians}.
By Hamilton's equation $\dot\eta=-\partial_a H_t$ (\eqref{eq:app-hamilton-equations}), we define 
\[
R_t^L(x,a)
:=-\partial_a H_t
\bigl(x,a;DF_{J_t}(x)^\top a,F_{J_t}(x)\bigr).
\]
Along a Hamiltonian trajectory,
$\eta_t=F_{J_t}(x_t)$ and
$\dot x_t=U_t^L(x_t,a_t)$.
Proposition~\ref{prop:image-graph-characteristics} therefore gives
\begin{equation}
    R_t^L(x_t,a_t)
=\dot\eta_t
=\frac{d}{dt}F_{J_t}(x_t)
=\partial_tF_{J_t}(x_t)
 +DF_{J_t}(x_t)U_t^L(x_t,a_t).
 \label{equ:formula_for_R}
\end{equation}
Equivalently, for every $(x,a)\in Q$,
\[
\partial_tF_{J_t}(x)
=R_t^L(x,a)-DF_{J_t}(x)U_t^L(x,a).
\]
Although the two terms may depend on $a$, their difference is independent
of $a$ by \eqref{equ:formula_for_R}.
For the parameterization used below, the probe remains constant and
spatial transport is independent of the probe:
\[
U_t^L(x,a)=U_t(x),\qquad B_t^L(x,a)=0.
\]
It follows that $R_t^L$ is also independent of $a$; write it as $R_t(x)$.
The induced continuous image velocity is
\begin{equation}
 V_t:=\partial_tF_{J_t}
 =R_t-dF_{J_t}(U_t)=R_t-DF_{J_t}U_t.
 \label{eq:continuous-transport-source}
\end{equation}

\Eqref{eq:continuous-transport-source} is a vector-valued
transport equation with a source term, governing the continuous image
representation. Accordingly, we refer to
$U_t\in\Gamma(TM)$ as the \emph{transport vector field} and
$R_t\in C^\infty(M;V)$ as the \emph{source field}, while
$DF_{J_t}$ denotes the spatial Jacobian. As expressed by \eqref{equ:formula_for_R}, the channel values
change at rate $R_t$ along trajectories satisfying
$\dot x_t=U_t(x_t)$.

Restricting to the pixel grid yields
\begin{equation}
\begin{aligned}
 u_t :=U_t|_{\mathcal G_N},\qquad r_t:=R_t|_{\mathcal G_N}, \qquad
 v_t:=\dot J_t=r_t-\bigl(DF_{J_t}|_{\mathcal G_N}\bigr)u_t.
\end{aligned}
\label{eq:discrete-trajectory-fields}
\end{equation}
All Jacobian--vector products on the grid are pointwise.
Lower-case $u_t,r_t,v_t$ denote grid arrays, whereas upper-case
$U_t,R_t,V_t$ denote continuous fields.

\paragraph{Scope.}
Every $C^1$ image path admits a Hamiltonian realization for any transport
field (Proposition~\ref{prop:hamiltonian-lift}); standard flow matching is
the case $U_t\equiv 0$, and the split is not unique
(Remark~\ref{app:lift-nonuniqueness}). The geometry thus does not restrict
image velocities; it derives the transport--source parameterization, with
the probe $a$ packaging all channels into one generating function
$S_I=a^\top F_I$ governed by one Hamiltonian.

\section{Models}
From \eqref{eq:continuous-transport-source}, Hamiltonian dynamics
naturally induce a decomposition into spatial transport and changes
in channel values. The transport vector field moves image structures,
while the source field changes their appearance along the motion.

Figure~\ref{fig:source-transport-roles} illustrates these complementary
roles: the chosen source field changes the digit's color, while
transport moves its spatial structure.

\begin{figure}[!htbp]
    \centering
    \includegraphics[width=\linewidth]{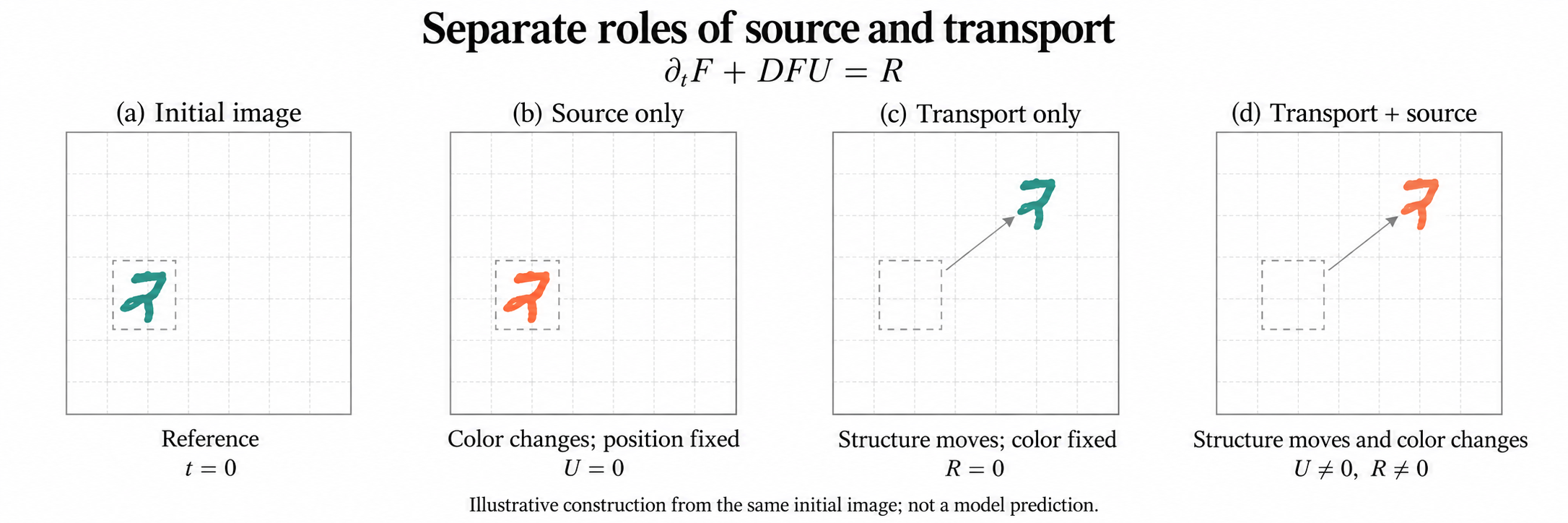}
    \caption{Separate roles of source and transport.}
    \label{fig:source-transport-roles}
\end{figure}

Before introducing the models, we illustrate the benefits of such  decomposition with a simple example in Figure~\ref{fig:transport-source-illustration}.
Pixelwise linear interpolation blends features at fixed spatial
locations, producing overlapping facial features when the endpoints
are not aligned. In contrast, the  transport--source
construction aligns corresponding features while changing their
appearance. This comparison illustrates how the Hamiltonian dynamics can capture structural changes through explicit spatial transport,
motivating its use in image generation and video prediction.

\begin{figure}[!htbp]
    \centering
    \includegraphics[width=\linewidth]
        {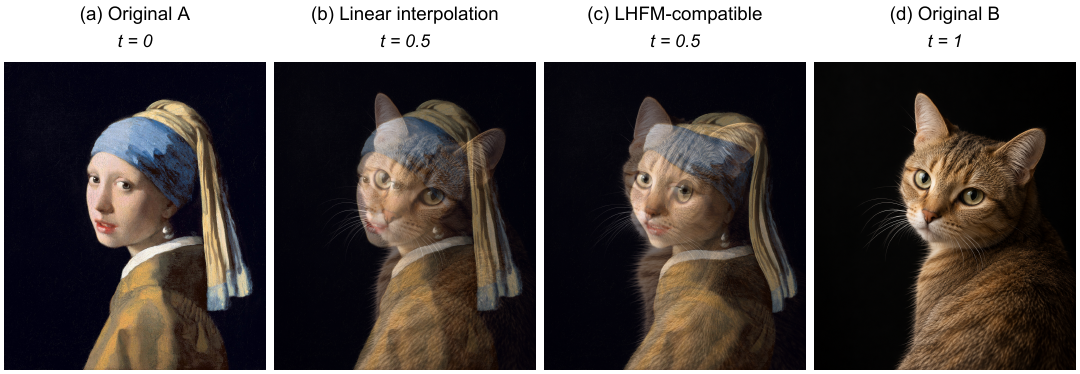}
    \caption{Spatial transport versus pixelwise blending.
    (a,d) Endpoint images: \emph{Girl with a Pearl Earring} and an
    AI-generated cat portrait.
    (b) Pixelwise linear interpolation at $t=0.5$.
    (c) Interpolation compatible with the LHFM
    transport--source formulation at the same time.}
    \label{fig:transport-source-illustration}
\end{figure}

\subsection{LHFM-I: Flow Matching for Image Generation}
\label{sec:image-generation}

Let $\nu_{\mathrm{data}}$ be the image data distribution and
$\nu_0=\mathcal N(0,\mathrm{Id}_d)$ the prior.
For independent $\epsilon\sim\nu_0$, $I\sim\nu_{\mathrm{data}}$,
and $t\sim\mathcal U(0,1)$, define the conditional interpolation
\begin{equation}
 J_t^{\mathrm{cond}}=(1-t)\epsilon+tI,\qquad y=I-\epsilon.
 \label{eq:training-bridge}
\end{equation}

\par\begingroup
\setlength{\parskip}{3pt plus 1pt}
\setlength{\abovedisplayskip}{4pt plus 1pt minus 1pt}
\setlength{\belowdisplayskip}{4pt plus 1pt minus 1pt}
\noindent
\begin{minipage}[t]{0.53\linewidth}
\vspace{0pt}
A neural network with parameters $\theta$ parameterizes a source array
$r_\theta(t,J)$ and a
continuous transport vector field $U_\theta(t,J,\cdot)$. Its grid
restriction is $
 u_\theta(t,J):=\left.U_\theta(t,J,\cdot)\right|_{\mathcal G_N}.
$
The induced image velocity is
\[
 v_\theta(t,J)=r_\theta(t,J)
 -\bigl(DF_J|_{\mathcal G_N}\bigr)u_\theta(t,J).
\]
We minimize the conditional flow-matching objective
\begin{equation}
 \mathcal L_{\mathrm{CFM}}(\theta)
 =\frac1d\mathbb E\left\|
 v_\theta(t,J_t^{\mathrm{cond}})-y\right\|_2^2.
 \label{eq:lhfm-cfm-loss}
\end{equation}
This is the standard independent-endpoint regression objective
\citep{lipman2023flow,tong2024improving}, with a transport--source
parameterization of the predicted image velocity.

For sampling, draw $J_0\sim\nu_0$ and solve
$\dot J_t=v_{\bar\theta}(t,J_t)$, $0\leq t\leq1$,
using exponential moving average (EMA) parameters $\bar\theta$.
The implementation evaluates the full-band DCT spatial Jacobian and
uses a fixed-step Heun solver in image coordinates.
\end{minipage}\hfill
\begin{minipage}[t]{0.44\linewidth}
\vspace{0pt}
\setlength{\intextsep}{0pt}
\algrenewcommand\algorithmicindent{0.8em}
\begin{algorithm}[H]
    \small
    \raggedright
    \caption{LHFM-I for Image Generation}
    \label{alg:lhfm-training}
    \begin{algorithmic}
    \Statex \textbf{Input:} Prior $\nu_0$, data distribution
    $\nu_{\mathrm{data}}$, and parameterized transport and source
    arrays $u_\theta,r_\theta$.
    \Statex \textbf{Velocity:}
    \Statex $\begin{aligned}
    v_\theta(t,J)&:=r_\theta(t,J)\\
    &\quad-\bigl(DF_J|_{\mathcal G_N}\bigr)u_\theta(t,J),
    \end{aligned}$
    \Statex where $F_J=\mathcal F_N(J)$.
    \While{training}
        \State Sample independently
        $\epsilon\sim\nu_0$, $I\sim\nu_{\mathrm{data}}$,
        $t\sim\mathcal U(0,1)$
        \State $J\gets(1-t)\epsilon+tI$
        \State $\mathcal L(\theta)\gets
        d^{-1}\|v_\theta(t,J)-(I-\epsilon)\|_2^2$
        \State $\theta\gets
        \operatorname{Update}(\theta,\nabla_\theta\mathcal L(\theta))$
    \EndWhile
    \State \Return $v_\theta$
    \end{algorithmic}
\end{algorithm}
\end{minipage}
\par\endgroup

\subsection{LHFM-V: Deterministic Video Prediction}
\label{sec:video-prediction}

Let $\mathbf I_{\mathrm{obs}}=(I_1,\ldots,I_{T_{\mathrm{obs}}})$ be
the observed frames. We predict the next $K$ frames by evolving an image
state from $J_{T_{\mathrm{obs}}}=I_{T_{\mathrm{obs}}}$.
Here $s$ denotes video time, measured in frame intervals, rather than
the noise-to-data time $t$ in conditional flow matching.
The continuous formulation is
\begin{equation}
 \partial_sF_{J_s}=R_s-DF_{J_s}U_s,
 \qquad J_{T_{\mathrm{obs}}}=I_{T_{\mathrm{obs}}},
 \label{eq:video-continuous-dynamics}
\end{equation}
with fields conditioned on $\mathbf I_{\mathrm{obs}}$ and the evolving
image and recurrent states.

\paragraph{Conditional prediction and numerical integration.}
LHFM-V encodes the observed frames and their temporal
differences with a convolutional history encoder. It jointly predicts
features for the future frames, then recurrently updates an image state
using transport and source outputs. A causal memory stores the observed
history and recent predicted states; feedback from the current image
updates the recurrent state at each half-frame interval. The field
prediction modules share parameters across the rollout. Unlike the
bounded transport used in LHFM-I, the LHFM-V transport
output has no explicit amplitude bound or temporal gate.
The Moving MNIST implementation uses $T_{\mathrm{obs}}=K=10$ and two
steps per future frame. Set $h=1/2$, $s_m=T_{\mathrm{obs}}+mh$, and
$J^{(0)}=I_{T_{\mathrm{obs}}}$.

For video prediction we use a second-order upwind spatial discretization,
not the DCT derivative used for image generation. The network outputs
$w_m$, in pixels per frame, and $r_m^{\mathrm{net}}$, a source array.
The corresponding normalized-coordinate transport is $u_m=w_m/N$.
Let $A(w_m)$ denote the upwind approximation to $-U_{s_m}\cdot\nabla$,
with zero exterior values. After spatial discretization of
\eqref{eq:video-continuous-dynamics}, we hold the predicted transport
and source fields fixed over each interval of length $h$.
Solving the resulting linear ODE by the variation-of-constants
formula gives
\begin{equation}
 J^{(m+1)}\approx e^{hA(w_m)}J^{(m)}
 +\int_0^h e^{(h-\sigma)A(w_m)}r_m^{\mathrm{net}}\,d\sigma,
 \qquad \widehat I_{T_{\mathrm{obs}}+k}=J^{(2k)}.
 \label{eq:video-exponential-step}
\end{equation}
The fields are recomputed from the updated recurrent state at every half-step;
the source is integrated along with transport, not added only at the
end of the interval. Here $r^{\mathrm{net}}$ distinguishes the numerical
source output from $r=R|_{\mathcal G_N}$ in the exact DCT-based
\eqref{eq:discrete-trajectory-fields}.
Their relation is given in Appendix~\ref{app:fiber-affine-flow}.

\section{Experiments}
\label{sec:experiments}

We evaluate LHFM-I on unconditional CIFAR-10 generation and LHFM-V on
deterministic Moving MNIST prediction. Architectures, training settings, and
evaluation protocols are provided in
Appendix~\ref{app:supplementary-experiments}.

\subsection{Image Generation}
\label{sec:image-generation-experiments}

Table~\ref{tab:cifar10-published-comparison} compares  LHFM-I
with standard independent conditional flow matching (I-CFM)
\citep{tong2024improving} and published baselines.
At 150k updates,  LHFM-I achieves an FID of 3.5809, compared with
3.8241 for the matched I-CFM baseline, a 6.36\% reduction in this comparison. The gap persists across sampling budgets:
at 64, 128, and 256 NFE, LHFM-I lowers FID by 0.28, 0.28, and 0.24,
respectively (Figure~\ref{fig:nfe-fid}). Published results are included for context
rather than as matched-protocol comparisons.

\begin{table}[!htbp]
    \makeatletter
    \let\refstepcounter\H@refstepcounter
    \makeatother
    \centering
    \small
    \begin{minipage}[t]{0.40\linewidth}
    \vspace{0pt}
    \setlength{\abovecaptionskip}{0pt}
    \setlength{\belowcaptionskip}{4pt}
    \raggedright
    \caption[Unconditional CIFAR-10 generation.]{\raggedright
    Unconditional CIFAR-10 generation.
    Published results are taken from Table~5 of
    \citet{tong2024improving}.
    The bottom two rows report our matched 150k-update experiments;
    boldface highlights LHFM-I.}
    \label{tab:cifar10-published-comparison}
    {\footnotesize\raggedright
    \textit{Note.} Published results use different training and evaluation
    protocols; details are given in Appendix~\ref{app:evaluation-protocol}.
    A dash denotes an unreported value.\par}
    \end{minipage}\hfill
    \begin{minipage}[t]{0.58\linewidth}
    \vspace{0pt}
    \centering
    \setlength{\tabcolsep}{3.5pt}
    \renewcommand{\arraystretch}{1.02}
    \begin{tabular*}{\linewidth}{@{}l@{\extracolsep{\fill}}rr@{}}
        \toprule
        \textbf{Method} & \textbf{FID} $\downarrow$ & \textbf{NFE} \\
        \midrule
        DDPM (reported) & 7.48 & 274 \\
        OT-FM (reported) & 6.35 & 142 \\
        VP-FM (reported) & 8.06 & 183 \\
        S.I. (reported) & 10.27 & --- \\
        OT-FM (reproduced) & 11.527 & 139.83 \\
        VP-FM (Tong et al.) & 4.335 & 525.92 \\
        OT-FM (Tong et al.) & 3.655 & 143.00 \\
        S.I. (Tong et al.) & 4.009 & 146.12 \\
        I-CFM (Tong et al.) & 3.659 & 146.42 \\
        OT-CFM (Tong et al.) & 3.577 & 133.94 \\
        \midrule[0.8pt]
        I-CFM (matched baseline) & 3.8241 & 256 \\
        \textbf{LHFM-I (ours)} & \textbf{3.5809} & \textbf{256} \\
        \bottomrule
    \end{tabular*}
    \end{minipage}
\end{table}

\begin{figure}[!htbp]
    \centering
    \begin{minipage}[c]{0.50\linewidth}
        \centering
        \includegraphics[width=\linewidth]{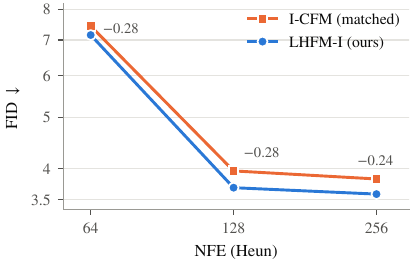}
    \end{minipage}\hfill
    \begin{minipage}[c]{0.46\linewidth}
        \caption{FID versus sampling budget for the matched 150k-update
        models on unconditional CIFAR-10 (fixed-step Heun solver). Labels give the FID reduction of LHFM-I
        relative to I-CFM at each budget.}
        \label{fig:nfe-fid}
    \end{minipage}
\end{figure}

\subsection{Video Prediction}
\label{sec:video-experiments}

We consider Moving MNIST \citep{srivastava2015unsupervised} prediction
from ten observed to ten future frames.
Table~\ref{tab:video-prediction-results} compares LHFM-V with published
recurrent video predictors in terms of model size, computational cost,
and prediction accuracy.
These results provide context rather than protocol-matched comparisons.

\begin{table}[H]
    \makeatletter
    \let\refstepcounter\H@refstepcounter
    \makeatother
    \centering
    \footnotesize
    \setlength{\tabcolsep}{2.5pt}
    \renewcommand{\arraystretch}{1.06}
    \caption{Moving MNIST results for recurrent video predictors, which capture temporal dependencies through recurrent hidden-state updates. Baseline numbers are as reported in the original papers.}
    \label{tab:video-prediction-results}
    \begin{tabular*}{\linewidth}{@{}l@{\extracolsep{\fill}}rrrrr@{}}
        \toprule
        \textbf{Method} & \shortstack{\textbf{Params}\\(M)} &
        \shortstack{\textbf{FLOPs}\\(G)} &
        \textbf{MSE} $\downarrow$ & \textbf{MAE} $\downarrow$ &
        \textbf{SSIM} $\uparrow$ \\
        \midrule
        ConvLSTM~\citep{shi2015convolutional} & 15.0 & 56.8 & 103.3 & 182.9 & 0.707 \\
        PredRNN~\citep{wang2017predrnn} & 23.8 & 116.0 & 56.8 & 126.1 & 0.867 \\
        PredRNN++~\citep{wang2018predrnnpp} & 38.6 & 171.7 & 46.5 & 106.8 & 0.898 \\
        MIM~\citep{wang2019memory} & 38.0 & 179.2 & 44.2 & 101.1 & 0.910 \\
        E3D-LSTM~\citep{wang2019eidetic} & 51.0 & 298.9 & 41.3 & 86.4 & 0.910 \\
        PhyDNet~\citep{leguen2020disentangling} & 3.1 & 15.3 & 24.4 & 70.3 & 0.947 \\
        MAU~\citep{chang2021mau} & 4.5 & 17.8 & 27.6 & 86.5 & 0.937 \\
        PredRNNv2~\citep{wang2022predrnnv2} & 24.6 & 708.0 & 48.4 & 129.8 & 0.891 \\
        SwinLSTM~\citep{tang2023swinlstm} & 20.2 & 69.9 & 17.7 & --- & 0.962 \\
        \midrule[0.8pt]
        \textbf{LHFM-V (ours)} & 18.6 & \textbf{13.1} & 18.6 & 62.5 & 0.958 \\
        \bottomrule
    \end{tabular*}

\end{table}

\paragraph{Why recurrent predictors.}
LHFM-V shares the defining structure of recurrent predictors: a shared
cell advances an explicit state, here the image itself, frame by frame,
so each future frame depends only on the observed frames and previously
predicted states. We therefore compare within this category.

\paragraph{A discussion of computational cost.}
Among the recurrent models in Table~\ref{tab:video-prediction-results}, LHFM-V has the lowest reported FLOP count (13.1G FLOPs). This corresponds to its image-update mechanism: instead of repeatedly reconstructing frames from latent features, LHFM-V predicts transport and source fields that describe motion and color changes. A numerical solver then evolves the current image under these fields, replacing high-resolution neural image decoding with lightweight field prediction and structured image updates. 

\section{Conclusion}
We introduce a geometric framework to image generation and video prediction that
represents images as exact Lagrangian graphs and uses their
Hamiltonian evolution to formulate transport--source dynamics.
This construction connects a geometric representation of images
to a recurrent prediction model: learned transport and source
fields jointly advance the image through numerical integration.
Among the compared recurrent predictors, LHFM-V attains the lowest
FLOP count at similar prediction accuracy. The same geometric
formulation also supports flow matching for image generation,
where our matched experiment shows improved performance over
the baseline.

The current model has not yet undergone systematic
optimization of its architecture or training strategy.
We expect further refinement of these components to improve
the accuracy--efficiency trade-off, a direction that remains
to be validated in future work.

\subsection*{AI use statement}
Generative AI tools were used to assist with language editing,
mathematical checks, software development, and
feedback on experimental design. An AI-generated portrait was used in an illustrative figure, not as training or evaluation data.
The author reviewed the AI-assisted material and takes
responsibility for the final content and claims.

\subsection*{Ethics statement}
This work studies image generation and video prediction using public image and video benchmarks and does not involve human participants or private data. As with other image generators, downstream deployment may reproduce dataset biases or enable misleading synthetic content; such uses require application-specific evaluation and safeguards.

\subsection*{Reproducibility statement}
Section~\ref{sec:image-lagrangian-fm} specifies the image representation,
Hamiltonian dynamics, and separate objectives for image generation and
video prediction. Appendix~\ref{app:lhfm-proofs} contains the proofs and
the relation between continuous geometry and numerical discretization.
Appendix~\ref{app:supplementary-experiments} records model architectures,
training settings, data splits, checkpoint selection, and metric definitions.

The supplementary materials include the implementation,
run-specific configurations, checkpoint identifiers, and
evaluation reports for the reported experiments.
They also provide the numerical integration and metric
computation routines needed to reproduce the evaluations.

\bibliography{iclr2027_conference}
\bibliographystyle{iclr2027_conference}

\appendix
\clearpage
\section*{Appendix Contents}

\begingroup
\small
\setlength{\parskip}{3pt}
\newcommand{\lhfmappendixentry}[2]{%
  \noindent\hspace*{#1}%
  \hyperref[#2]{\ref*{#2}\quad\nameref*{#2}}%
  \nobreak\dotfill\hyperref[#2]{\pageref*{#2}}\par}
{\bfseries\lhfmappendixentry{0pt}{app:mathematical-background}}
\lhfmappendixentry{1em}{app:dct-interpolation}
\lhfmappendixentry{1em}{app:exactness-proof}
\lhfmappendixentry{1em}{app:image-graph-hamiltonians-proof}
\lhfmappendixentry{1em}{app:fiber-affine-flow}
\medskip
{\bfseries\lhfmappendixentry{0pt}{app:supplementary-experiments}}
\lhfmappendixentry{1em}{app:model-hyperparameters}
\lhfmappendixentry{1em}{app:video-hyperparameters}
\lhfmappendixentry{1em}{app:implementation-details}
\lhfmappendixentry{1em}{app:evaluation-protocol}
\medskip
{\bfseries\lhfmappendixentry{0pt}{app:examples}}
\lhfmappendixentry{1em}{app:examples-image}
\lhfmappendixentry{1em}{app:examples-video}
\endgroup
\clearpage

\section{Mathematical Background and Proofs}
\label{app:mathematical-background}
\label{app:lhfm-proofs}

This appendix contains the results needed for the image representation
and Hamiltonian dynamics in Section~\ref{sec:image-lagrangian-fm}.
Interpolation and exact recovery are established first, followed by
the graph-transport proofs and the relation between continuous and
discrete image dynamics.

\subsection{Full-band interpolation and exact recovery}
\label{app:dct-interpolation}

Let $M=\mathbb{R}^2$ be the spatial coordinate space and $V=\mathbb{R}^n$ the
space of $n$-channel pixel values.  For images of resolution $N\times N$, the
pixel-centered grid and image state space are
\begin{equation}
 \mathcal{G}_N =\left\{\left(\frac{j+1/2}{N},\frac{\ell+1/2}{N}\right):
                  j,\ell=0,\ldots,N-1\right\}, \quad  \mathcal{X}_N =V^{\mathcal{G}_N}\cong\mathbb{R}^{nN^2}.
 \label{eq:pixel-state-space}
\end{equation}
Thus an image $I\in\mathcal{X}_N$ assigns a channel vector to each
pixel center.  We write $d=nN^2$; our CIFAR-10 implementation uses
$n=3$ and $N=32$, while Moving MNIST uses $n=1$ and $N=64$.

The following theorem records the standard full-band interpolation
construction based on the orthonormal type-II discrete cosine transform
(DCT-II) \citep{ahmed1974discrete,strang1999discrete}. We record it here to fix the conventions, the proof is standard and thus omitted. 

\begin{theorem}[Exact DCT interpolation]
\label{thm:dct-interpolation}
Let $n,N\geq 1$ be integers and let
$I\in\mathbb{R}^{n\times N\times N}$.  On the pixel-centered grid
$x_j=(j+\tfrac{1}{2})/N$, $j=0,\ldots,N-1$, define
\begin{equation}
 b_0(x)=N^{-1/2},\qquad
 b_k(x)=\sqrt{\frac{2}{N}}\cos(\pi kx),\quad k=1,\ldots,N-1.
 \label{eq:cosine-basis}
\end{equation}
Set $B=(b_k(x_j))_{j,k=0}^{N-1}\in\mathbb{R}^{N\times N}$.
For each channel $c$, let $I_c$ be its pixel array, set
$C_c=B^\top I_c B$, and define, for $x=(x_1,x_2)\in\mathbb{R}^2$,
\begin{equation}
 F_I^c(x)=\sum_{k,\ell=0}^{N-1}(C_c)_{k\ell}\,b_k(x_1)b_\ell(x_2),
 \qquad c=1,\ldots,n.
 \label{eq:cosine-extension}
\end{equation}
Then $B$ is orthogonal, and the map
$F_I=(F_I^1,\ldots,F_I^n)^\top:\mathbb{R}^2\to\mathbb{R}^n$
is smooth, even, and $2$-periodic in each spatial coordinate.  It
interpolates the image exactly:
\begin{equation}
 F_I^c(x_j,x_\ell)=(I_c)_{j\ell},\qquad
 c=1,\ldots,n,\quad j,\ell=0,\ldots,N-1.
 \label{eq:dct-exact-recovery}
\end{equation}
In particular, $I\mapsto F_I$ is linear and injective, with all DCT
coefficients retained, including the constant component.
\end{theorem}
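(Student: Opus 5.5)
The proof reduces everything to the orthogonality of $B$ and then verifies the remaining properties directly from the formula for $F_I$.

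\textbf{Step 1: orthogonality of $B$.} Compute
\[
(B^\top B)_{kk'}=\sum_{j=0}^{N-1} b_k(x_j)\,b_{k'}(x_j).
\]
\begin{itemize}
\item For $k=k'=0$ the sum is $N\cdot N^{-1}=1$.
\item Otherwise, use the product-to-sum formula
\[
\cos(\pi k x_j)\cos(\pi k' x_j)=\tfrac12\bigl[\cos(\pi(k+k')x_j)+\cos(\pi(k-k')x_j)\bigr].
\]
This reduces the entry to sums of the form $\sum_j \cos(\pi m (j+\tfrac12)/N)$ with integer $m$, $|m|\le 2N-2$.
\item Write that sum as the real part of a geometric series in $e^{\mathrm{i}\pi m/N}$, with prefactor $e^{\mathrm{i}\pi m/(2N)}$. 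For $m\not\equiv 0 \pmod{2N}$ it equals
\[
\mathrm{Re}\,\frac{e^{\mathrm{i}\pi m/(2N)}\bigl(e^{\mathrm{i}\pi m}-1\bigr)}{e^{\mathrm{i}\pi m/N}-1}
=\mathrm{Re}\,\frac{(-1)^m-1}{2\mathrm{i}\sin(\pi m/(2N))}=0,
\]
since the last fraction is purely imaginary.
\item For $m=0$ the sum equals $N$.
\item With the normalizations $N^{-1/2}$ and $\sqrt{2/N}$, this gives diagonal entries $1$ (the $k=k'\ge1$ case contributes $\tfrac{2}{N}\cdot\tfrac{N}{2}$) and off-diagonal entries $0$.
\item The range $0\le k,k'\le N-1$ guarantees $k+k'<2N$ and $0<|k-k'|<N$ whenever $k\neq k'$. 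This bookkeeping step is the only real obstacle.
\end{itemize}

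\textbf{Step 2: exact interpolation.} Evaluating $F_I^c$ at grid points gives
\[
F_I^c(x_j,x_\ell)=\sum_{k,\ell'} B_{jk}(C_c)_{k\ell'}B_{\ell\ell'}=(B C_c B^\top)_{j\ell}.
\]
Since $B$ is orthogonal, $BB^\top B=B$, and so
\[
B C_c B^\top=(BB^\top) I_c (BB^\top)=I_c,
\]
which is \eqref{eq:dct-exact-recovery}.

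\textbf{Step 3: the remaining properties.}
\begin{itemize}
\item \emph{Smoothness, evenness, periodicity.} $F_I^c$ is a finite sum of products of cosines $\cos(\pi k x_i)$, each smooth, even, and $2$-periodic in its coordinate. Hence $F_I$ is smooth, even, and $2$-periodic in each spatial coordinate.
\item \emph{Linearity.} $I\mapsto C_c=B^\top I_c B$ is linear, and $C_c\mapsto F_I^c$ is linear.
\item \emph{Injectivity.} By Step 2, $F_I=0$ forces $I_c=F_I^c|_{\mathcal G_N}=0$ for every $c$.
\item \emph{All coefficients retained.} The sum in \eqref{eq:cosine-extension} runs over every $k,\ell$, including $k=\ell=0$, so every DCT coefficient, including the constant component, appears.
\end{itemize}
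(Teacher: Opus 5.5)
Your proof is correct, and it is the standard argument the paper has in mind: the paper explicitly omits this proof as standard, and your steps (orthogonality of the DCT-II matrix via the geometric-sum identity with the index bookkeeping $0<|k\pm k'|<2N$, then $F_I^c|_{\mathcal G_N}=BC_cB^\top=I_c$) are exactly what that omission refers to. One small fix: Step 2 actually uses $BB^\top=\mathrm{Id}$, which follows from $B^\top B=\mathrm{Id}$ because $B$ is square; the identity $BB^\top B=B$ you cite there is not the relevant one.
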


The even, $2$-periodic boundary convention is part of the chosen
representation, rather than an assumption about the unknown underlying
continuum data distribution.

\label{app:functional-background}
\begin{definition}[Image function space]
\label{def:image-function-space-topology}
For $m\geq1$, let
\[
 \mathcal H_N^{(m)}
 =\operatorname{span}\{b_k(x_1)b_\ell(x_2)e_c:
    0\leq k,\ell<N,\ 1\leq c\leq m\}
 \subset C^\infty(M;\mathbb R^m),
\]
with the subspace topology inherited from the usual Fr\'echet topology.
For $m=n$, write $\mathcal H_N=\mathcal H_N^{(n)}$.
The corresponding grid space is
$\mathcal X_N^{(m)}=(\mathbb R^m)^{\mathcal G_N}$.
\end{definition}

\begin{proposition}[Extension, recovery, and differentiation]
\label{prop:state-space-isomorphism}
The componentwise extension
$\mathcal F_N^{(m)}:\mathcal X_N^{(m)}\to\mathcal H_N^{(m)}$
is a topological linear isomorphism.  In particular, for $m=n$,
\begin{equation}
 \mathcal{F}_N:\mathcal{X}_N\xrightarrow{\;\cong\;}\mathcal{H}_N,
 \qquad \mathcal{F}_N(I)=F_I,
 \label{eq:image-state-chart}
\end{equation}
with inverse grid sampling $\mathcal{S}_N(F)=F|_{\mathcal{G}_N}$.
For a $C^1$ image path $J_t$,
\begin{equation}
 \partial_t F_{J_t}
 =D\mathcal F_N(J_t)[\dot J_t]
 =\mathcal F_N(\dot J_t).
 \label{eq:linear-extension-tangent-map}
\end{equation}
\end{proposition}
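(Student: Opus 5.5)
The plan is to reduce everything to finite-dimensional linear algebra, using Theorem~\ref{thm:dct-interpolation} for each channel separately. The map $\mathcal F_N^{(m)}$ acts componentwise: it sends an $m$-channel grid array to $\sum_{c}\sum_{k,\ell}(B^\top I_c B)_{k\ell}\,b_k(x_1)b_\ell(x_2)e_c$. This is linear, and by construction its image lies in $\mathcal H_N^{(m)}$.

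\emph{Injectivity and surjectivity.} Injectivity follows from the exact recovery identity \eqref{eq:dct-exact-recovery}. For every $I$ we have $\mathcal S_N(\mathcal F_N^{(m)}(I))=F_I|_{\mathcal G_N}=I$, so $\mathcal S_N\circ\mathcal F_N^{(m)}=\mathrm{id}$. For surjectivity, take any $F=\sum C^c_{k\ell}b_kb_\ell e_c\in\mathcal H_N^{(m)}$. Set $I_c=B C^c B^\top$. Since $B$ is orthogonal, $B^\top I_c B=C^c$, and hence $\mathcal F_N^{(m)}(I)=F$. It follows that $\mathcal F_N^{(m)}$ is a linear bijection with inverse $\mathcal S_N$. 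As a side remark, this also shows that the $m N^2$ spanning functions are linearly independent, so $\dim\mathcal H_N^{(m)}=mN^2$.

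\emph{Topology.} The grid space $\mathcal X_N^{(m)}$ is finite-dimensional with its Euclidean topology, and $\mathcal H_N^{(m)}$ is a finite-dimensional subspace of the Fr\'echet space $C^\infty(M;\mathbb R^m)$. Every Hausdorff topological vector space topology on a finite-dimensional space coincides with the Euclidean one. Therefore any linear bijection between these two spaces is a homeomorphism, and this settles continuity of both $\mathcal F_N^{(m)}$ and $\mathcal S_N$.

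If one prefers a direct argument, continuity of $\mathcal F_N^{(m)}$ can be checked seminorm by seminorm. Each derivative seminorm $\sup_K|\partial^\alpha F|$ is bounded by a constant times $\max|C|$, and $\max|C|\lesssim\|I\|$. Continuity of $\mathcal S_N$ is immediate, since point evaluations are continuous for the $C^0$ seminorm.

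\emph{Tangent formula.} Let $J_t$ be a $C^1$ path. Because $\mathcal F_N$ is a continuous linear map between finite-dimensional spaces, it is its own derivative, so $D\mathcal F_N(J_t)[\dot J_t]=\mathcal F_N(\dot J_t)$. Explicitly, $F_{J_t}(x)=\sum_{c,k,\ell}(B^\top (J_t)_c B)_{k\ell}\,b_k(x_1)b_\ell(x_2)e_c$. This is a finite sum of $C^1$ coefficients multiplying fixed smooth functions, so the pointwise time derivative $\partial_tF_{J_t}$ exists and is obtained by differentiating the coefficients. That gives $\mathcal F_N(\dot J_t)$, and the same holds for all spatial derivatives and locally uniformly, i.e.\ in the Fr\'echet sense.

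\emph{Main obstacle.} The only mildly delicate point is being explicit about which topology is meant and invoking the uniqueness of Hausdorff vector topologies on finite-dimensional spaces. Everything else is a direct consequence of the orthogonality of $B$ and the exact recovery identity in Theorem~\ref{thm:dct-interpolation}.
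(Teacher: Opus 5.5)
Your proposal is correct and follows the paper's proof. Exact recovery from Theorem~\ref{thm:dct-interpolation} gives injectivity, and the spanning functions give surjectivity with $\mathcal S_N$ as the inverse; you make the paper's terse right-inverse claim explicit through $I_c=BC^cB^\top$ and the orthogonality of $B$. Continuity follows from the finite expansion in fixed smooth functions together with continuity of point evaluation, and linearity plus the chain rule give the tangent formula. Your additional appeal to the uniqueness of Hausdorff vector topologies on finite-dimensional spaces is simply a shortcut to the same continuity statements.
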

\begin{proof}
Theorem~\ref{thm:dct-interpolation} proves injectivity and exact recovery.
The tensor-product basis spans $\mathcal H_N^{(m)}$, so grid sampling
also gives a right inverse. The extension is continuous because it is a
finite expansion in fixed smooth functions, and point evaluation is
continuous in the Fr\'echet topology. Linearity implies
$D\mathcal F_N(J)[h]=\mathcal F_N(h)$, and the chain rule gives the
path identity. All these statements concern a fixed resolution $N$.
\end{proof}

The spatial differential is a different map:
$dF_J|_x:T_xM\to V$ acts on spatial tangent vectors, whereas
$D\mathcal F_N(J)$ acts on image-array perturbations.
For a spatial path $x_t$, the ordinary chain rule gives
$\frac{d}{dt}F_{J_t}(x_t)=\mathcal F_N(\dot J_t)(x_t)
+DF_{J_t}(x_t)\dot x_t$.

\subsection{Exact Lagrangian Image Graphs}
\label{app:exactness-proof}

Recall that an embedding $\iota:L\hookrightarrow(T^*Q,\omega=-d\lambda)$ is
Lagrangian if $\dim L=\dim Q$ and $\iota^*\omega=0$.
It is exact if $\iota^*\lambda=df$ for a globally defined function
$f:L\to\mathbb R$. In canonical coordinates,
$\lambda=p^\top dq$ and $\omega=\sum_i dq^i\wedge dp_i$.

Our convention $\iota_{X_{H_t}}\omega=d_zH_t$ gives
\begin{equation}
 \dot q=\partial_pH_t,\qquad \dot p=-\partial_qH_t,
 \qquad q=(x,a),\quad p=(\xi,\eta).
 \label{eq:app-hamilton-equations}
\end{equation}
These are ambient derivatives, taken before restricting to a graph.

\begin{proposition}[Exactness and image recovery]
\label{prop:intensity-lift}
For every $I\in\mathcal X_N$, the graph $L_I=\operatorname{graph}(dS_I)$
is an exact Lagrangian submanifold of $T^*Q$. The encoder $E$ is
injective, and $I$ is recovered by reading $\eta$ at $a=0$ and
$x\in\mathcal G_N$.
\end{proposition}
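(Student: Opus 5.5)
The claim splits into three parts, which I will take in order: the Lagrangian property, exactness, and injective recovery.

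\textbf{Lagrangian and exact.} Since $F_I$ is smooth (Theorem~\ref{thm:dct-interpolation}), the function $S_I(x,a)=a^\top F_I(x)$ is smooth on $Q=\mathbb R^2\times V^*$. I parametrize the graph by the section
\[
\sigma(x,a)=\bigl(x,a;DF_I(x)^\top a,F_I(x)\bigr).
\]
This section is a smooth embedding $Q\to T^*Q$, because composing it with the bundle projection gives the identity on $Q$. Its image therefore has dimension $\dim Q=2+n$, which is half of $\dim T^*Q$.

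Pulling back the canonical one-form~\eqref{eq:canonical-forms} gives
\[
\sigma^*\lambda=(DF_I(x)^\top a)^\top dx+F_I(x)^\top da .
\]
I then check that this equals $dS_I$ by the product rule: $d(a^\top F_I(x))=a^\top DF_I(x)\,dx+F_I(x)^\top da$. So $\sigma^*\lambda=dS_I$, and the global primitive is $f=S_I\circ\pi$ on $L_I$. This gives exactness. It also gives the Lagrangian condition, since $\sigma^*\omega=-d\,\sigma^*\lambda=-d\,dS_I=0$.

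\textbf{Recovery and injectivity.} On $L_I$ the $\eta$-coordinate above a base point $(x,a)$ equals $F_I(x)$, whatever the value of $a$. Reading $\eta$ at $a=0$ and $x\in\mathcal G_N$ therefore returns $F_I|_{\mathcal G_N}$. By the exact interpolation~\eqref{eq:dct-exact-recovery}, or equivalently the inverse sampling map $\mathcal S_N$ of Proposition~\ref{prop:state-space-isomorphism}, this is exactly $I$.

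Injectivity of $E$ then follows. If $L_I=L_{I'}$, both are graphs over $Q$, so they agree fiberwise. Their $\eta$-components at every $(x,0)$ coincide, hence $F_I=F_{I'}$. Applying the recovery step gives $I=I'$; alternatively, injectivity of $I\mapsto F_I$ from Theorem~\ref{thm:dct-interpolation} gives the same conclusion.

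\textbf{Main subtlety.} The argument is routine, and the only point that needs care is the pairing and sign conventions. The identification of $\xi$ with $DF_I(x)^\top a$ must be consistent with $\lambda=\xi^\top dx+\eta^\top da$ and with $\omega=-d\lambda$. Beyond that, the graph must be a closed embedded submanifold, which holds for the graph of any smooth section.
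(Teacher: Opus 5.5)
Your proposal is correct and follows essentially the same route as the paper: you pull $\lambda$ back along the graph section to get $dS_I$ and note that the graph is a half-dimensional section over $Q$. You then read off $\eta=F_I(x)=I(x)$ at $a=0$ on $\mathcal G_N$, which gives both recovery and injectivity. You also spell out what the paper leaves implicit, namely the Lagrangian condition via $\sigma^*\omega=-d\,dS_I=0$ and the fact that the section is a closed embedding.
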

\begin{proof}
For $\iota_I(x,a)=(x,a;DF_I(x)^\top a,F_I(x))$,
\[
 \iota_I^*\lambda
 =a^\top DF_I(x)\,dx+F_I(x)^\top da
 =d\bigl(a^\top F_I(x)\bigr)=dS_I.
\]
The graph is diffeomorphic
to $Q$ and has half the dimension of $T^*Q$.
At $a=0$ and $x\in\mathcal G_N$, one has
$\xi=0$ and $\eta=F_I(x)=I(x)$ by
Theorem~\ref{thm:dct-interpolation}. Thus two coincident graphs
have the same image samples, proving injectivity.
\end{proof}

Constant channel offsets are retained through $\eta$.
Although an individual graph has dimension $n+2$, its image family
has $nN^2$ degrees of freedom; this representation is not a compression
to $n+2$ scalar coordinates. In image generation, the prior is a
distribution over whole image arrays and hence whole encoded graphs,
not a distribution of individual points in $T^*Q$.

\subsection{Hamiltonians transporting image graphs}
\label{app:image-graph-hamiltonians-proof}

    \begin{proof}[Proof of Lemma~\ref{lem:image-graph-hamiltonians}]
    Write $q=(x,a)$ and $p=(\xi,\eta)$.
    The moving graph is specified by
    \[
        p=d_qS_t(q).
    \]
    Using Hamilton's equations
    \[
        \dot q=\partial_pH_t,
        \qquad
        \dot p=-\partial_qH_t,
    \]
    the extended vector field $\partial_t+X_{H_t}$ is tangent
    to this moving graph if and only if
    \[
        -\partial_qH_t(q,d_qS_t)
        =
        \partial_t d_qS_t
        +D_q^2S_t\,\partial_pH_t(q,d_qS_t).
    \]
    By the chain rule, this is equivalent to
    \[
        d_q\!\left[
            \partial_tS_t+H_t(q,d_qS_t)
        \right]=0.
    \]
    Since $Q=\mathbb R^2\times\mathbb R^n$ is connected, the
    expression in brackets is a function of time alone. 
    This proves the equivalence between tangency and
    \eqref{eq:lemma-hamilton-jacobi}.
    
    Under the assumed existence of the Hamiltonian flow,
    tangency and uniqueness of trajectories imply
    \[
        \Phi^H_{s\rightarrow t}(L_s)\subseteq L_t.
    \]
    Applying the inverse flow $\Phi^H_{t\rightarrow s}$ gives
    the reverse inclusion, proving statement~(1) of
    Lemma~\ref{lem:image-graph-hamiltonians}.
    Conversely, differentiating this graph-transport identity gives tangency.
    Thus statements~(1) and~(2) are equivalent.
    
    To prove that statement~(2) implies statement~(3), set
    \[
        p_t(q)=d_qS_t(q),
        \qquad
        K_t(q,p)=H_t(q,p)-c(t)+\partial_tS_t(q).
    \]
    \Eqref{eq:lemma-hamilton-jacobi} gives
    \[
        K_t(q,p_t(q))=0.
    \]
    The fundamental theorem of calculus along the momentum fiber
    therefore yields
    \[
    \begin{aligned}
        K_t(q,p)
        &=
        \int_0^1
        \partial_pK_t\bigl(q,p_t(q)+s[p-p_t(q)]\bigr)^\top
        [p-p_t(q)]\,ds\\
        &=
        [p-p_t(q)]^\top Z_t(q,p),
    \end{aligned}
    \]
    where
    \[
        Z_t(q,p)
        =
        \int_0^1
        \partial_pK_t\bigl(q,p_t(q)+s[p-p_t(q)]\bigr)\,ds
    \]
    is smooth.
    Denote the $\xi$- and $\eta$-components of $Z_t$ by $U_t$ and $B_t$,
    respectively.  Using
    \[
        p_t(q)=\bigl(DF_t(x)^\top a,F_t(x)\bigr)
    \]
    gives \eqref{eq:lemma-general-hamiltonian}.
    Conversely, restricting
    \eqref{eq:lemma-general-hamiltonian} to $L_t$ immediately
    gives \eqref{eq:lemma-hamilton-jacobi}.
    \end{proof}

\begin{proposition}[Hamiltonian characteristics on image graphs]
    \label{prop:image-graph-characteristics}
    Under the assumptions of Lemma~\ref{lem:image-graph-hamiltonians},
    suppose that its equivalent conditions hold, and let $U_t$ and $B_t$
    be the coefficients in \eqref{eq:lemma-general-hamiltonian}.
    Define
    \[
    \begin{aligned}
        U_t^L(x,a)
        &=
        U_t\bigl(x,a;DF_t(x)^\top a,F_t(x)\bigr), \quad 
        B_t^L(x,a)
        =
        B_t\bigl(x,a;DF_t(x)^\top a,F_t(x)\bigr).
    \end{aligned}
    \]
    The Hamiltonian characteristics restricted to $L_t$ are
    \begin{equation}
    \label{eq:lemma-graph-characteristics}
    \begin{aligned}
        \dot x
        &=U_t^L(x,a),\qquad
        \dot a=B_t^L(x,a),\\
        \dot\eta
        &=\partial_tF_t(x)+DF_t(x)U_t^L(x,a),\\
        \dot\xi
        &=\bigl(D\partial_tF_t(x)\bigr)^\top a
          +D_x\bigl(DF_t(x)^\top a\bigr)U_t^L(x,a)
          +DF_t(x)^\top B_t^L(x,a).
    \end{aligned}
    \end{equation}
    Here $D$ denotes differentiation with respect to $x$, and
    $D_x(DF_t(x)^\top a)$ is computed with $a$ held fixed.
    \end{proposition}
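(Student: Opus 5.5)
The plan is to compute Hamilton's equations \eqref{eq:app-hamilton-equations} for the representation \eqref{eq:lemma-general-hamiltonian} of Lemma~\ref{lem:image-graph-hamiltonians}, and then restrict to $L_t$. On $L_t$ both factors $\xi-DF_t(x)^\top a$ and $\eta-F_t(x)$ vanish. So when we differentiate a product such as $(\xi-DF_t^\top a)^\top U_t$, only the term in which the derivative falls on the vanishing factor survives. This product-rule cancellation is the whole mechanism, and no derivatives of $U_t$ or $B_t$ will appear.

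The first step is the base equations. We have $\partial_\xi H_t = U_t + (\text{terms multiplied by the vanishing factors})$, so on $L_t$ this gives $\dot x = U_t^L$. In the same way $\partial_\eta H_t = B_t$ on $L_t$, so $\dot a = B_t^L$.

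The second step is the fiber equations, computed via $\dot p = -\partial_q H_t$. For $\eta$ we compute
\[
\partial_a H_t = -\partial_t F_t(x) - DF_t(x)U_t + (\text{vanishing}),
\]
since $\partial_a$ of $(\xi - DF_t^\top a)^\top U_t$ contributes $-DF_t(x)U_t$ on the graph. Hence $\dot\eta = \partial_t F_t + DF_t U_t^L$. For $\xi$ we compute
\[
\partial_x H_t = -(D\partial_t F_t)^\top a - \bigl(D_x(DF_t^\top a)\bigr)^\top U_t - DF_t^\top B_t
\]
on $L_t$. The last term comes from $\partial_x[(\eta - F_t)^\top B_t] = -DF_t^\top B_t$. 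Negating gives the stated $\dot\xi$, once we note that $D_x(DF_t(x)^\top a)$ is the Hessian $\sum_c a_c D^2F_t^c$, which is symmetric, so the transpose may be dropped.

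The step needing the most care is bookkeeping rather than ideas. We must track transposes and index conventions for $D_x(DF_t^\top a)$, and we must make sure we are taking ambient partial derivatives before restricting, as the paper stresses. As a consistency check, we will verify that the resulting $\dot\xi,\dot\eta$ agree with the time derivatives of $DF_t(x_t)^\top a_t$ and $F_t(x_t)$ along the base curve. This agreement is guaranteed by the tangency established in the proof of Lemma~\ref{lem:image-graph-hamiltonians}, and it serves as an alternative derivation of the two fiber equations by the chain rule alone.
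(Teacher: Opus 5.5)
Your proposal is correct and matches the paper's proof: differentiate the ambient representation of $H_t$ from Lemma~\ref{lem:image-graph-hamiltonians}, then restrict to $L_t$, where every term multiplied by $\xi-DF_t(x)^\top a$ or $\eta-F_t(x)$ vanishes. The paper records the same chain-rule consistency check, $\dot\eta=\frac{d}{dt}F_t(x(t))$ and $\dot\xi=\frac{d}{dt}\bigl[DF_t(x(t))^\top a(t)\bigr]$, and your explicit bookkeeping, including the symmetry of $D_x(DF_t(x)^\top a)$ that lets you drop the transpose, fills in details the paper leaves implicit.
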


    \begin{proof}[Proof of Proposition~\ref{prop:image-graph-characteristics}]
    Differentiate
    \eqref{eq:lemma-general-hamiltonian} in the ambient phase
    space and then restrict to $L_t$.
    All terms multiplied by
    $\xi-DF_t(x)^\top a$ or $\eta-F_t(x)$ vanish, yielding
    \eqref{eq:lemma-graph-characteristics}.
    In particular,
    \[
        \dot\eta
        =
        \frac{d}{dt}F_t(x(t)),
        \qquad
        \dot\xi
        =
        \frac{d}{dt}\bigl[DF_t(x(t))^\top a(t)\bigr],
    \]
    which explicitly verifies preservation of the graph constraints.
    \end{proof}

The function $c(t)$ in the Hamilton--Jacobi condition expresses the
freedom to add a function of time to a generating function:
replacing $S_t$ by $S_t-\int_{t_0}^t c(\sigma)\,d\sigma$ leaves
its graph unchanged and sets the right-hand side to zero.
Graph transport permits reparameterization along the submanifold;
points with fixed base coordinate need not be Hamiltonian trajectories
\citep{carinena2006geometric}.

\subsection{Compatible Continuous and Discrete Image Dynamics}
\label{app:fiber-affine-flow}

\begin{proposition}[Compatible Hamiltonian realization]
\label{prop:hamiltonian-lift}
Let $J_t$ be a $C^1$ image path with $v_t=\dot J_t$.
Let $U_t$ be continuous in time, smooth in space, with jointly
continuous spatial derivatives, and assume its spatial trajectories
exist throughout the interval for every initial time and point.
Define
\begin{equation}
 R_t=\mathcal F_N(v_t)+DF_{J_t}U_t,
 \qquad H_t=\xi^\top U_t-a^\top R_t.
 \label{eq:app-compatible-source}
\end{equation}
Then the Hamiltonian flow transports $L_{J_{t_0}}$ to $L_{J_t}$.
Without interval-wide existence, the statement holds on the
corresponding flow domains.
\end{proposition}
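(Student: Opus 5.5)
The plan is to reduce the statement to Lemma~\ref{lem:image-graph-hamiltonians}, taking $F_t=F_{J_t}$ and $S_t(x,a)=a^\top F_{J_t}(x)$, and to verify the Hamilton--Jacobi condition~(2) with $c\equiv 0$. The Hamiltonian $H_t=\xi^\top U_t(x)-a^\top R_t(x)$ has exactly the form of statement~(3) of the lemma with $c=0$, $B_t\equiv 0$ and $U_t$ depending only on $x$, provided $R_t=\partial_tF_{J_t}+DF_{J_t}U_t$. Indeed,
\[
\xi^\top U_t-a^\top R_t
=-a^\top\partial_tF_{J_t}+\bigl(\xi-DF_{J_t}^\top a\bigr)^\top U_t .
\]
By Proposition~\ref{prop:state-space-isomorphism}, $\partial_tF_{J_t}=\mathcal F_N(\dot J_t)=\mathcal F_N(v_t)$, so the definition of $R_t$ in \eqref{eq:app-compatible-source} gives precisely this identity. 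Restricting to $\xi=DF_{J_t}(x)^\top a$ therefore yields $\partial_tS_t+H_t(q,d_qS_t)=a^\top\partial_tF_{J_t}+a^\top DF_{J_t}U_t-a^\top R_t=0$.

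The regularity hypotheses of the lemma are stronger than what we have: $F$ is only $C^1$ in $t$, and $U_t$ is only continuous in time. Rather than invoke the lemma verbatim, I would rerun the tangency and uniqueness argument directly, since here the Hamiltonian system is explicit and decoupled. Hamilton's equations \eqref{eq:app-hamilton-equations} give
\[
\dot x=U_t(x),\quad \dot a=0,\quad \dot\eta=R_t(x),\quad \dot\xi=-DU_t(x)^\top\xi+DR_t(x)^\top a .
\]
This vector field is continuous in $t$ and locally Lipschitz in $z$, because $U_t$ and $R_t$ are smooth in $x$ with jointly continuous spatial derivatives; $R_t$ inherits this from $F_{J_t}$, a finite cosine expansion with $C^1$-in-time coefficients. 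Hence solutions are unique. The $x$-equation is solved by the assumed spatial flow $\phi_{t_0\to t}$, the $\eta$-equation by integration, and the $\xi$-equation is linear, so it exists on the whole interval. This is the step where the existence hypothesis enters, and where one restricts to flow domains otherwise.

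To show graph transport, I would take $z_0=(x_0,a;DF_{J_{t_0}}(x_0)^\top a,F_{J_{t_0}}(x_0))\in L_{J_{t_0}}$ and compare its trajectory with the candidate curve $\tilde z(t)=(x_t,a;DF_{J_t}(x_t)^\top a,F_{J_t}(x_t))$, where $x_t=\phi_{t_0\to t}(x_0)$. By the chain rule recorded after Proposition~\ref{prop:state-space-isomorphism},
\[
\tfrac{d}{dt}F_{J_t}(x_t)=\mathcal F_N(v_t)(x_t)+DF_{J_t}(x_t)U_t(x_t)=R_t(x_t),
\]
which matches $\dot\eta$. For the $\xi$-component, I would differentiate the identity $a^\top R_t=a^\top\partial_tF_{J_t}+a^\top DF_{J_t}U_t$ in $x$ to obtain $D(\partial_tF_{J_t})^\top a+D_x(DF_{J_t}^\top a)U_t=DR_t^\top a-DU_t^\top DF_{J_t}^\top a$. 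The left side is $\frac{d}{dt}\bigl[DF_{J_t}(x_t)^\top a\bigr]$, so the candidate curve satisfies the $\xi$-equation with $\xi=DF_{J_t}(x_t)^\top a$. This is the computation I expect to be the main obstacle: one must check that mixed derivatives $D\partial_t F=\partial_t DF$ commute, which holds because $F_{J_t}$ is a finite expansion in fixed smooth basis functions with $C^1$ coefficients. It is also where the bookkeeping of transposes has to be done carefully.

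Uniqueness then gives $\Phi_{t_0\to t}(z_0)=\tilde z(t)\in L_{J_t}$, so $\Phi_{t_0\to t}(L_{J_{t_0}})\subseteq L_{J_t}$. Surjectivity follows because $(x_0,a)\mapsto(\phi_{t_0\to t}(x_0),a)$ is a bijection of $Q$, its inverse being the backward spatial flow. Hence equality holds.
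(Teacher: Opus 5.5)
Your proposal is correct and follows essentially the same route as the paper: you write out Hamilton's equations for $H_t=\xi^\top U_t-a^\top R_t$ and check that the curve $(x_t,a;DF_{J_t}(x_t)^\top a,F_{J_t}(x_t))$ solves them, using the chain rule for $\eta$ and the $x$-derivative of $\partial_tF_{J_t}=R_t-DF_{J_t}U_t$ for $\xi$; uniqueness then gives inclusion and the invertible spatial flow gives equality. The paper also adds a Cartan-formula remark that the flow preserves $\omega$, which the graph-transport claim does not need, and your explicit regularity bookkeeping (commuting mixed partials, local Lipschitz bounds for uniqueness) usefully fills in details the paper leaves implicit.
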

\begin{proof}
Proposition~\ref{prop:state-space-isomorphism} gives
$\partial_tF_{J_t}=\mathcal F_N(v_t)=R_t-DF_{J_t}U_t$.
Hamilton's equations are
\[
 \dot x=U_t(x),\quad \dot a=0,\quad
 \dot\eta=R_t(x),\quad
 \dot\xi=-DU_t(x)^\top\xi+DR_t(x)^\top a.
\]
Along a spatial characteristic, set
$\eta_t=F_{J_t}(x_t)$ and $\xi_t=DF_{J_t}(x_t)^\top a$.
The chain rule verifies the $\eta$ equation.
Differentiating $\partial_tF_{J_t}=R_t-DF_{J_t}U_t$ in $x$
and using the symmetry of second spatial derivatives verifies the
$\xi$ equation. Thus trajectories starting on the graph stay on it.
The spatial flow is a diffeomorphism on the assumed interval;
the remaining equations are linear or affine along it, so both
forward and backward ambient trajectories exist. Applying the inverse
flow proves equality of the transported graphs.
Finally, Cartan's formula gives
$\mathcal L_{X_{H_t}}\lambda=d(\lambda(X_{H_t})-H_t)$.
Integration in time shows that the ambient flow preserves $\lambda$
up to an exact form and therefore preserves $\omega=-d\lambda$.
\end{proof}

\begin{rem}[Spatial discretization and the source field]
\label{rem:discrete-source-compatibility}
For the DCT-based image velocity
$v_t=r_\theta-(DF_{J_t}|_{\mathcal G_N})u_t$, the compatible source
in \eqref{eq:app-compatible-source} satisfies
$R_t|_{\mathcal G_N}=r_\theta$.
However, $R_t$ need not equal $\mathcal F_N(r_\theta)$, since
$DF_{J_t}U_t$ need not belong to $\mathcal H_N$.

For video prediction, substituting
$v_s=r_s^{\mathrm{net}}+A(w_s)J_s$ and $u_s=w_s/N$ into
\eqref{eq:app-compatible-source} gives
\begin{equation}
 r_s:=R_s|_{\mathcal G_N}
 =r_s^{\mathrm{net}}+A(w_s)J_s
  +\bigl(DF_{J_s}|_{\mathcal G_N}\bigr)u_s.
 \label{eq:video-source-compatibility}
\end{equation}
Thus $r_s$ generally differs from $r_s^{\mathrm{net}}$ because the
upwind and DCT transport terms differ. This correction specifies
the compatible continuous Hamiltonian source; it is not added to
the numerical update.
\end{rem}

For the video variant, the predicted fields are frozen
within each half-frame interval and may change
discontinuously at interval boundaries.
Proposition~\ref{prop:hamiltonian-lift} therefore applies
intervalwise to the exact solutions of the frozen-field
grid ODEs, subject to its spatial-flow existence assumptions.
Since the image state is continuous across interval
boundaries, the corresponding Hamiltonian flows can be
composed to transport the image graphs over successive
intervals. The implemented truncated-series updates
approximate these exact grid evolutions.

\begin{rem}[Nonuniqueness and numerical interpretation]
\label{app:lift-nonuniqueness}
\label{app:numerical-closure-proof}
For the DCT velocity, $(u,r)\mapsto(u+\delta u,
r+(DF_J|_{\mathcal G_N})\delta u)$ leaves $v$ unchanged whenever
the modified fields remain admissible. Thus the CFM objective alone
does not uniquely identify the transport and source fields. 
\end{rem}

\clearpage
\section{Supplementary Experiments}
\label{app:supplementary-experiments}

We provide architecture, training, and evaluation details for image
generation and deterministic video prediction. LHFM-I, LHFM-V, and the
matched I-CFM baseline are trained from random initialization, without
pretrained components, distillation, or additional fine-tuning.

\subsection{Image-Generation Architectures and Hyperparameters}
\label{app:model-hyperparameters}

The matched CIFAR-10 models share a time-dependent U-Net backbone and training settings
(Table~\ref{tab:matched150k-hyperparameters}). Standard I-CFM predicts
three image-velocity channels; LHFM-I predicts two transport and
three source channels, adding only 2,306 parameters.
 LHFM-I uses $u_\theta=0.125t\,\tanh(\widetilde u_\theta)$ and
$v_\theta=r_\theta-(DF_J|_{\mathcal G_N})u_\theta$.
Here $\widetilde u_\theta$ is the raw two-component network output.
The elementwise nonlinearity bounds each grid component by $0.125t$
and suppresses transport near the noise endpoint. This is one choice
allowed by Section~\ref{sec:fiber-affine-hamiltonian}.
Both models use independent endpoint sampling with $\sigma=0$ and
the conditional flow-matching objective in Algorithm~\ref{alg:lhfm-training},
with $v_\theta=f_\theta$ for I-CFM. The I-CFM baseline follows
\citet{tong2024improving} and the authors'
\href{https://github.com/atong01/conditional-flow-matching}{TorchCFM}
method, using our local matched backbone rather than the upstream
TorchCFM U-Net or a pretrained checkpoint;
it is not an LHFM-I variant or minibatch OT-CFM.

\begin{table}[!htbp]
    \makeatletter
    \let\refstepcounter\H@refstepcounter
    \makeatother
    \centering
    \small
    \setlength{\tabcolsep}{6pt}
    \renewcommand{\arraystretch}{1.08}
    \begin{tabular}{@{}lcc@{}}
        \toprule
        \textbf{Setting} & \textbf{Standard I-CFM} & \textbf{LHFM-I} \\
        \midrule
        Dataset / resolution & \multicolumn{2}{c}{CIFAR-10 / $32\times32$ RGB} \\
        Conditioning & \multicolumn{2}{c}{Unconditional} \\
        Base channels / multipliers & \multicolumn{2}{c}{128 / $(1,2,2,2)$} \\
        Residual blocks per level (down / up) & \multicolumn{2}{c}{$2/3$} \\
        Attention resolutions & \multicolumn{2}{c}{$16,8$; $4$ (bottleneck)} \\
        Attention heads / channels per head & \multicolumn{2}{c}{$4/64$} \\
        Dropout & \multicolumn{2}{c}{0.1} \\
        Output channels & 3 (velocity) & 5 (transport and source) \\
        Parameters & 39,625,603 & 39,627,909 \\
        \midrule
        Batch size / training seed & \multicolumn{2}{c}{$256/270829$} \\
        Optimizer / coefficients / weight decay & \multicolumn{2}{c}{AdamW / $(0.9,0.999)$ / 0} \\
        Peak / final scheduled learning rate & \multicolumn{2}{c}{$2.5\times10^{-4}/2\times10^{-5}$} \\
        Schedule / warmup & \multicolumn{2}{c}{Cosine decay / 2,000 updates} \\
         evaluated checkpoint & \multicolumn{2}{c}{ 150k updates} \\
        EMA target decay / gradient clipping & \multicolumn{2}{c}{$0.9999/1$} \\
        Network / field precision & \multicolumn{2}{c}{BF16 / FP32} \\
        Data augmentation & \multicolumn{2}{c}{Random horizontal flip} \\
        Training hardware & \multicolumn{2}{c}{One NVIDIA GeForce RTX 5090} \\
        \bottomrule
    \end{tabular}
    \begingroup
    \makeatletter
    \let\refstepcounter\H@refstepcounter
    \makeatother
    \caption{Settings for the matched CIFAR-10 comparison. Shared entries
    apply to both models; reported results use the 150k-update checkpoints.}
    \label{tab:matched150k-hyperparameters}
    \endgroup
\end{table}

Both models are initialized from scratch and sample training images
uniformly with replacement. The EMA decay at update $k$ is
$\min\{0.9999,(1+k)/(10+k)\}$.

\subsection{Video-Prediction Architectures and Hyperparameters}
\label{app:video-hyperparameters}

\paragraph{Architecture.}
In LHFM-V, a convolutional encoder processes the observed frames, consecutive
frame differences, and spatial coordinates. Temporal attention combines
the observed history with six recent predicted states, while a separate
convolutional module predicts features for all ten future frames.
Separate heads with pixel-shuffle readouts produce the source
$r^{\mathrm{net}}$ and transport $w$, without amplitude bounds or temporal
gates. Predicted images are fed back into the recurrent state at each
half-frame step; training uses the full rollout without teacher forcing.
Figure~\ref{fig:lhfm-video-architecture} illustrates the architecture,
and Table~\ref{tab:video-hyperparameters} summarizes its settings.

\begin{figure}[!htbp]
    \centering
    \includegraphics[width=\linewidth,height=0.82\textheight,keepaspectratio]
        {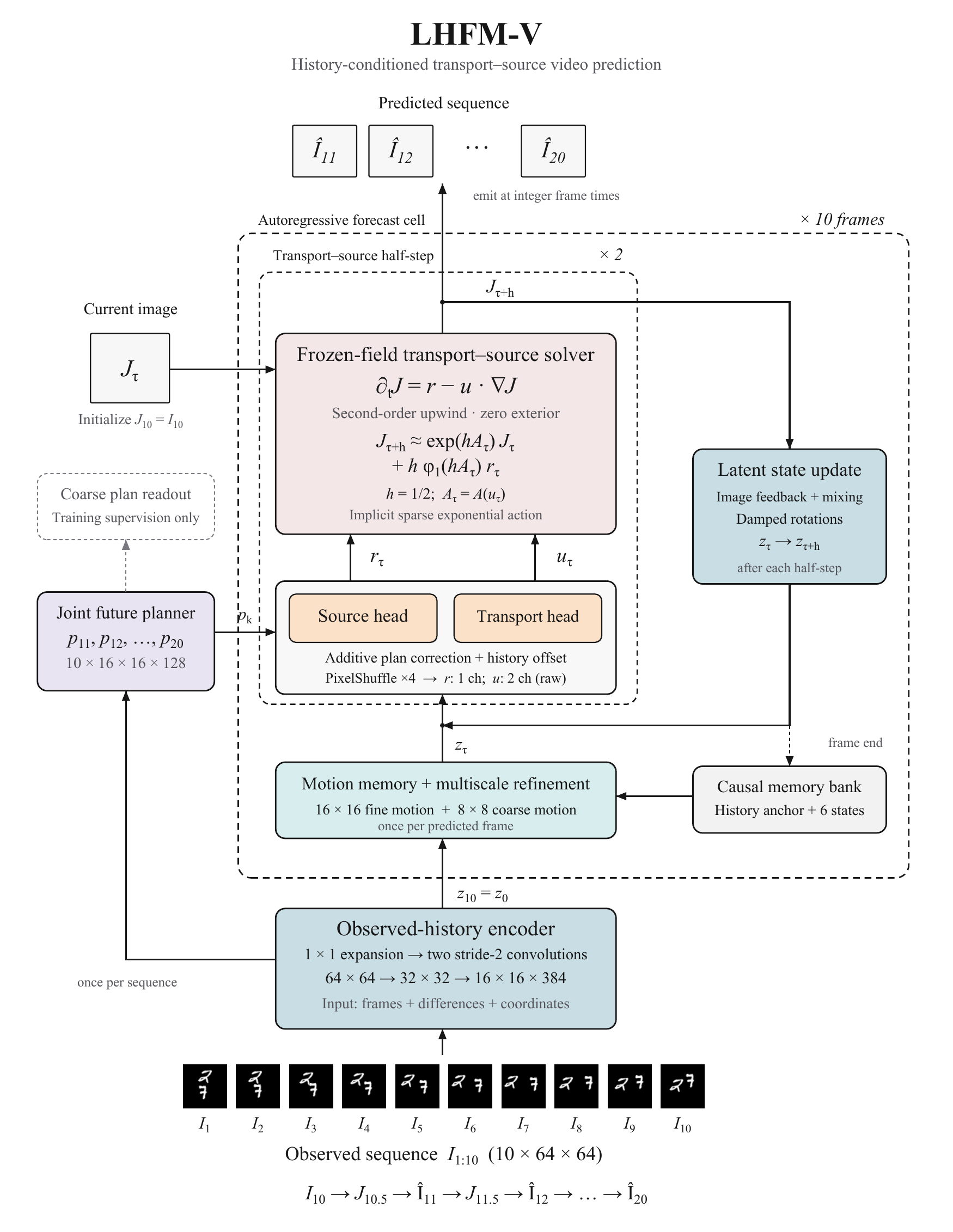}
    \caption{Architecture of LHFM-V for deterministic video prediction.
    Each future frame is produced by two transport--source half-steps,
    with image feedback and recurrent-state updates.
    In the diagram, $\tau$, $u$, and $r$ correspond to $s$, $w$, and
    $r^{\mathrm{net}}$ in the text, respectively; $w$ is measured in pixels
    per frame and satisfies $w=Nu$ in the text's notation.
    The function $\varphi_1(X)=\int_0^1 e^{(1-\sigma)X}\,d\sigma$
    represents the source integral in \eqref{eq:video-exponential-step}.}
    \label{fig:lhfm-video-architecture}
\end{figure}

\paragraph{Training.}
We train from scratch using AdamW on a single NVIDIA
GeForce RTX 5090. For the first 400,000 optimizer updates,
we use a two-phase OneCycle schedule configured for
1,250,000 updates with cosine annealing and
$\mathrm{pct\_start}=0.3$. The learning rate increases from
$4\times10^{-5}$ to $10^{-3}$ at update 375,000.
We then resume from the complete 400,000-update checkpoint,
preserving the model, optimizer, EMA, and random states.
At update 400,001, we halve the learning rate prescribed
by the original schedule to approximately
$4.989935\times10^{-4}$, then apply cosine decay to
$4\times10^{-9}$ at update 1,250,000, without additional
warmup. Adam's $\beta_1$ retains the original OneCycle
trajectory, decreasing from 0.95 to 0.85 during warmup
and subsequently increasing to 0.95; $\beta_2=0.999$.
Neural training uses BF16, while image integration,
recurrent states, and optimization use FP32.
EMA evaluation uses FP32 throughout.

\begin{table}[!htbp]
    \makeatletter
    \let\refstepcounter\H@refstepcounter
    \makeatother
    \centering
    \small
    \setlength{\tabcolsep}{5pt}
    \renewcommand{\arraystretch}{1.04}
    \begin{tabular}{@{}ll@{}}
        \toprule
        \textbf{Setting} & \textbf{LHFM-V} \\
        \midrule
        Resolution / channels & $64\times64$ / 1 \\
        Observed / future frames & $10/10$ \\
        Encoder widths / residual blocks & $(64,128,448)$ / 3 \\
        Recurrent-state channels / resolution & 384 / $16\times16$ \\
        Memory frames / attention heads & 6 plus history / 4 \\
        Memory key / value dimensions & $128/256$ \\
        Fine / coarse spatial channels & $320/192$ \\
        Future-feature channels / blocks & $128/4$ \\
        Source / transport head hidden width & $192/192$ \\
        Field evaluations per future frame & 2 \\
        Parameters (including auxiliary head) & 18,637,493 \\
        \midrule
        Batch size / training seed & $16/270829$ \\
        Optimizer / weight decay & AdamW / $10^{-4}$ \\
        Adam $\beta_1$ range / $\beta_2$ & $[0.85,0.95]$ / 0.999 \\
        Initial / peak learning rate & $4\times10^{-5}/10^{-3}$ \\
        Updates at evaluation / learning rate & 1.25M / $4\times10^{-9}$ \\
        EMA decay / gradient clipping (norm) & $0.999/1$ \\
        \bottomrule
    \end{tabular}
    \caption{Architecture and training settings for LHFM-V on Moving MNIST.}
    \label{tab:video-hyperparameters}
\end{table}

\paragraph{Objective.}
Let $\operatorname{MSE}$ denote the mean squared error over the minibatch,
channels, and spatial positions. With $K=10$ and two half-steps per
future frame, the objective is
\begin{equation}
\begin{aligned}
 \mathcal L_{\mathrm{pred}}
 &=\frac1K\sum_{k=1}^{K}\operatorname{MSE}
     (\widehat I_{T_{\mathrm{obs}}+k},I_{T_{\mathrm{obs}}+k})\\
 &\quad+\frac{\lambda_r}{2K}\sum_{m=0}^{2K-1}
     \operatorname{mean}[(r_m^{\mathrm{net}})^2]
     +\frac{\lambda_u}{2K}\sum_{m=0}^{2K-1}\operatorname{TV}(w_m)\\
 &\quad+\frac{\lambda_c}{K}\sum_{k=1}^{K}\operatorname{MSE}
     (\widehat I_k^{\mathrm{coarse}},\mathcal P_4 I_{T_{\mathrm{obs}}+k}),
 \label{eq:video-prediction-loss}
\end{aligned}
\end{equation}
where $\lambda_r=10^{-3}$, $\lambda_u=10^{-4}$, and $\lambda_c=0.05$.
Here $\mathcal P_4$ is average pooling with kernel and stride 4.
A shared linear head produces the $16\times16$ auxiliary predictions
$\widehat I_k^{\mathrm{coarse}}$ for training only; they do not enter the
image rollout. The transport regularizer is
\[
 \operatorname{TV}(w)=\tfrac12\left(
 \operatorname{mean}|w_{\cdot,j+1,\ell}-w_{\cdot,j,\ell}|
 +\operatorname{mean}|w_{\cdot,j,\ell+1}-w_{\cdot,j,\ell}|
 \right),
\]
with means over the minibatch, both components, and valid adjacent pairs.

\subsection{Numerical Implementation}
\label{app:implementation-details}

\paragraph{Image generation.}
We evaluate the full-band DCT spatial Jacobian on the pixel grid and
integrate $\dot J_t=v_\theta(t,J_t)$ with 128 Heun steps (256 NFE).
The Hamiltonian is not explicitly evaluated during sampling.

\paragraph{Video prediction.}
\label{app:video-implementation}
Each future frame uses two half-frame transport--source updates.
In normalized coordinates, the grid spacing is $\Delta=1/N$;
the implemented transport output $w=Nu$ uses pixels per frame.
Writing $w_i^+=\max(w_i,0)$ and $w_i^-=\max(-w_i,0)$, the stencil is
\begin{equation}
\begin{split}
 [A(w)J](g)=\sum_{i=1}^{2}\bigg\{&w_i^+(g)
 \left[-\tfrac32J(g)+2J(g-\Delta e_i)-\tfrac12J(g-2\Delta e_i)\right]\\
 &+w_i^-(g)
 \left[-\tfrac32J(g)+2J(g+\Delta e_i)-\tfrac12J(g+2\Delta e_i)\right]
 \bigg\}.
\end{split}
\label{eq:video-upwind-stencil}
\end{equation}
We use zero values outside the grid. This second-order upwind stencil
discretizes $-U\cdot\nabla F$, not $-\nabla\cdot(UF)$.

With fields frozen over a half-step of length $h$, we set
\[
q=\frac32\max\left\{1,\max_{b,g}
\bigl(|w_{b,1}(g)|+|w_{b,2}(g)|\bigr)\right\},
\qquad
L=\max\{1,\lceil hq/3\rceil\},
\]
where $b$ indexes the minibatch and $g$ the spatial grid.
Each internal interval has length $\delta=h/L$.
Writing $P=\mathrm{Id}+A(w)/q$ and $\mu=q\delta$,
we compute
\[
z_0=J,\qquad
z_{k+1}=Pz_k+r^{\mathrm{net}}/q,
\qquad k=0,\ldots,23,
\]
and update
\[
J_{\mathrm{next}}
=e^{-\mu}\sum_{k=0}^{24}\frac{\mu^k}{k!}z_k.
\]
We repeat this update over the $L$ internal intervals
using the same predicted fields. The numerical scaling
$q$ and subdivision count $L$ are treated as constants
during differentiation; gradients propagate through
$A(w)$ and $r^{\mathrm{net}}$.

\subsection{Evaluation Protocols}
\label{app:evaluation-protocol}

\paragraph{Image generation.}
We evaluate the 150k-update EMA checkpoints using 50,000 generated
images and all 50,000 CIFAR-10 training images as the reference.
Sampling uses Heun128 (256 NFE) with batch size 64.
FID is computed with TorchMetrics 1.9.0 and
torch-fidelity 0.4.0. Generated pixels are mapped from $[-1,1]$ to $[0,1]$ and
clipped only for evaluation, with no intermediate solver clipping.
The published entries in Table~\ref{tab:cifar10-published-comparison}
use adaptive DOPRI5, unlike our fixed-step Heun evaluations.
Published baselines retain their
original protocols and are not matched-control comparisons.

\paragraph{Video prediction.}
Moving MNIST frames are normalized to $[0,1]$.
We partition the 60,000 MNIST training digit images into
55,000 training digits and 5,000 validation digits using
split seed 271100. Training sequences are generated
on demand from the training digit pool, with each sequence
determined by data seed 270829 and its absolute sequence
index. Each sequence contains two moving digits and
20 frames, split into ten observed and ten future frames.
We construct 1,024 fixed validation sequences from the
held-out digit pool using seed 271109 and select the EMA
checkpoint with the lowest validation MSE. The results in
Table~\ref{tab:video-prediction-results} evaluate the selected 1.25M-step (2000 epochs)
EMA checkpoint on the official 10,000-sequence test set, which was not
used for checkpoint selection.
MSE and MAE are the spatial sums of squared and absolute errors,
respectively, on unclipped predictions, averaged over future frames and
sequences. SSIM and PSNR use predictions
clipped to $[0,1]$. SSIM is computed using the scikit-image 0.19.3
implementation with a $7\times7$ uniform window, sample covariance,
\texttt{data\_range=2}, and $K_1=0.01$, $K_2=0.03$.
The SSIM map is averaged over the interior after excluding a three-pixel
border, then averaged equally over future frames and sequences.
PSNR is $-10\log_{10}(\max\{\mathrm{MSE},10^{-12}\})$ using mean-pixel
MSE per frame, then averaged over frames and sequences.
Each observed sequence produces one deterministic prediction.

\clearpage
\section{Examples}
\label{app:examples}

\subsection{LHFM-I: Image Generation}
\label{app:examples-image}

Figure~\ref{fig:lhfm-i-examples} shows samples generated by LHFM-I on
CIFAR-10.

\begin{figure}[!htbp]
    \makeatletter
    \let\refstepcounter\H@refstepcounter
    \makeatother
    \centering
    \includegraphics[width=0.80\linewidth]
        {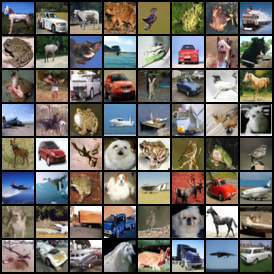}
    \caption{CIFAR-10 samples from LHFM-I, generated using a 150k-update
    EMA checkpoint and 128 Heun steps (256 NFE). The 64-image grid is
    retained in its original order. }
    \label{fig:lhfm-i-examples}
\end{figure}

\clearpage
\subsection{LHFM-V: Video Prediction}
\label{app:examples-video}

Figures~\ref{fig:lhfm-v-examples-first} and~\ref{fig:lhfm-v-examples-second}
show the first six official Moving MNIST test sequences in their original
order (sequence IDs 0--5). Predictions use the same validation-selected
1.25M-update EMA checkpoint as Table~\ref{tab:video-prediction-results}.

\begingroup
\newlength{\lhfmexamplerowheight}
\newcommand{\lhfmexampleclip}[2]{%
    \par\noindent{\small\textbf{Sequence #1}}\par\nobreak\vspace{2pt}%
    \setlength{\lhfmexamplerowheight}{0.084\linewidth}%
    \noindent\begin{minipage}[c]{0.13\linewidth}
        \scriptsize\setlength{\parskip}{0pt}\setlength{\parindent}{0pt}%
        \parbox[c][\lhfmexamplerowheight][c]{\linewidth}{\raggedleft Observed}\par
        \nointerlineskip
        \parbox[c][\lhfmexamplerowheight][c]{\linewidth}{\raggedleft Ground truth}\par
        \nointerlineskip
        \parbox[c][\lhfmexamplerowheight][c]{\linewidth}{\raggedleft LHFM-V}
    \end{minipage}\hfill
    \begin{minipage}[c]{0.84\linewidth}
        \includegraphics[width=\linewidth]{#2}%
    \end{minipage}\par
}

\begin{figure}[!htbp]
    \makeatletter
    \let\refstepcounter\H@refstepcounter
    \makeatother
    \centering
    \lhfmexampleclip{0}{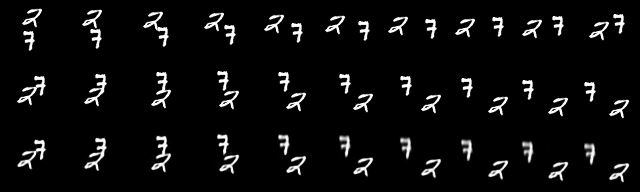}
    \vspace{7pt}
    \lhfmexampleclip{1}{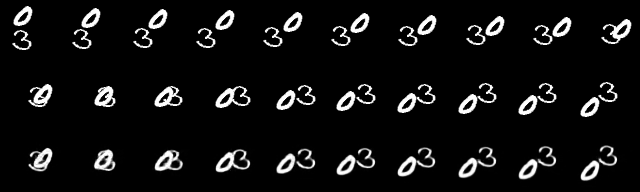}
    \vspace{7pt}
    \lhfmexampleclip{2}{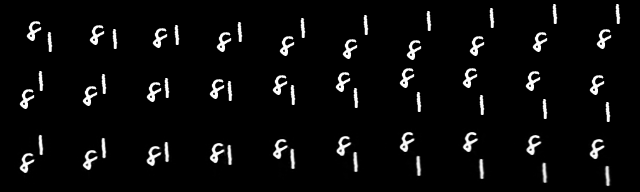}
    \caption{LHFM-V predictions on Moving MNIST (sequence IDs 0--2).
    For each sequence, rows show the ten observed frames (1--10),
    the ten ground-truth future frames (11--20), and the corresponding
    LHFM-V predictions. Time progresses from left to right.}
    \label{fig:lhfm-v-examples-first}
\end{figure}

\clearpage
\begin{figure}[!htbp]
    \makeatletter
    \let\refstepcounter\H@refstepcounter
    \makeatother
    \centering
    \lhfmexampleclip{3}{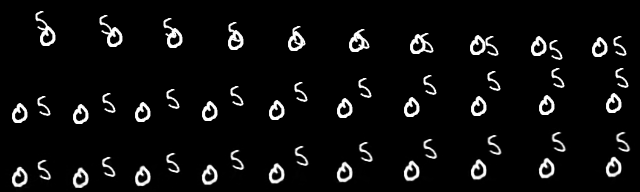}
    \vspace{7pt}
    \lhfmexampleclip{4}{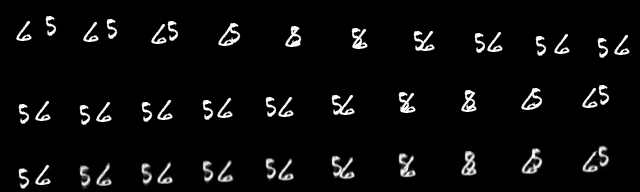}
    \vspace{7pt}
    \lhfmexampleclip{5}{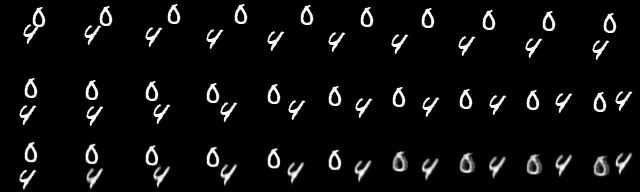}
    \caption{LHFM-V predictions on Moving MNIST (sequence IDs 3--5).
    Rows and time ordering follow Figure~\ref{fig:lhfm-v-examples-first}.
    Each observed sequence produces one deterministic prediction.}
    \label{fig:lhfm-v-examples-second}
\end{figure}
\clearpage
\endgroup

\end{document}